\theoremstyle{plain}
\newtheorem{theorem}{Theorem}
\newtheorem{lemma}{Lemma}
\definecolor{iccvblue}{rgb}{0.21,0.49,0.74}
\definecolor{mypink}{RGB}{255, 10, 255}
\definecolor{myorange}{RGB}{255,105,14}
\newcommand{\dataset}{PKR-QA\xspace}
\newcommand{\pkg}{PKG\xspace}
\newcommand{\kg}{PKG\xspace}
\newcommand{\kgs}{PKGS\xspace}
\newcommand\xnorightarrow[1]{%
  \stackengine{0.5pt}{$\xrightarrow{#1}$}{\textcolor{red}{\Huge$\times$}}{O}{c}{F}{T}{L}%
}
\newcommand{\un}[1]{\underline{#1}}
\newcommand{\svlmzero}[0]{Zero-shot}
\newcommand{\svlmzeroplus}[0]{VLM+P.VRL\xspace}
\newcommand{\skshot}[0]{QA~training\xspace}
\newcommand{\skgtrain}[0]{KG-training\xspace}
\newcommand{\skgkshot}[0]{KG+QA~training\xspace}
\newcommand{\skml}[0]{KML\xspace}
\newcommand{\kml}{KML\xspace}
\newcommand{\INGP}{IGP\xspace}
\newcommand{\dlong}[0]{Procedural Knowledge Reasoning Question Answering}
\newcommand{\dshort}[0]{PKR-QA}
\newcommand{\rotate}{RotatE\xspace}
\newcommand{\transe}{TransE\xspace}
\newcommand{\transh}{TransH\xspace}
\newcommand{\kmlfclip}{KML-F-CLIP\xspace}
\newcommand{\kmlclip}{KML-CLIP\xspace}
\newcommand{\kmlrand}{KML-Rand\xspace}
\def\checkmark{\tikz\fill[scale=0.4](0,.35) -- (.25,0) -- (1,.7) -- (.25,.15) -- cycle;}
\begin{document}

\title{Neuro Symbolic Knowledge Reasoning for \\Procedural Video Question Answering}

\author{
\IEEEauthorblockN{
Basura~Fernando\textsuperscript{1,2,3},
Thanh-Son~Nguyen\textsuperscript{2},
Hong~Yang\textsuperscript{1,2},
Tzeh~Yuan~Neoh\textsuperscript{1,2},
Hao~Zhang\textsuperscript{1,2},
Ee~Yeo~Keat\textsuperscript{1,2}
}
\IEEEauthorblockA{\textsuperscript{1}Centre for Frontier AI Research, A*STAR, Singapore.}\\
\IEEEauthorblockA{\textsuperscript{2}Institute of High-Performance Computing, A*STAR, Singapore.}\\
\IEEEauthorblockA{\textsuperscript{3}College of Computing and Data Science, NTU, Singapore.}\\
\IEEEauthorblockA{
Fernando\_Basura@a-star.edu.sg, Nguyen\_Thanh\_Son@a-star.edu.sg
}
}

% The paper headers
% \markboth{Journal of \LaTeX\ Class Files,~Vol.~14, No.~8, August~2021}%
% {Shell \MakeLowercase{\textit{et al.}}: A Sample Article Using IEEEtran.cls for IEEE Journals}

%\IEEEpubid{0000--0000/00\$00.00~\copyright~2021 IEEE}
% Remember, if you use this you must call \IEEEpubidadjcol in the second
% column for its text to clear the IEEEpubid mark.

\maketitle

\begin{abstract}
In this work we present Knowledge Module Learning (KML) to understand and reason over procedural tasks that requires models to learn structured and compositional procedural knowledge. 
KML is a neurosymbolic framework that learns relation categories within a knowledge graph as neural knowledge modules and composes them into executable reasoning programs generated by large language models (LLMs). 
Each module encodes a specific procedural relation capturing how each entity type such as tools are related to steps,  purpose of each tool, and steps of each task.
Given a question conditioned on a task shown in a video, then KML
performs multistep reasoning with transparent, traceable intermediate states.
Our theoretical analysis demonstrated two desired properties of KML.
KML satisfy strong optimal conditions for modelling KG relations as neural mappings, providing strong foundations for generalizable procedural reasoning.
It also shows a bound on the expected error when it performs multistep reasoning.
To evaluate this model, we construct a large \textit{procedural knowledge graph} (PKG) consisting of diverse instructional domains by integrating the COIN instructional video dataset, and COIN ontology, commonsense relations from ConceptNet, and structured extractions from LLMs, followed by expert verification. We then generate question and answer pairs by applying graph-traversal templates over the PKG, constructing the \dataset benchmark for procedural knowledge reasoning.
Experiments show that KML improves structured reasoning performance while providing interpretable step-by-step traces, outperforming LLM-only and black-box neural baselines. Code is publicly available at \url{https://github.com/LUNAProject22/KML}.
\end{abstract}

\begin{IEEEkeywords}
Knowledge Reasoning, Knowledge Module Learning, Question Answering, Procedural Reasoning
\end{IEEEkeywords}

%\title{}
% \author[1,2,3]{Basura Fernando}
% \author[1,2]{Thanh-Son Nguyen}
% \affil[1]{
% Institute of High-Performance Computing, Agency for Science, Technology and Research, Singapore}
% \affil[2]{
% Centre for Frontier AI Research, Agency for Science, Technology and Research, Singapore}
% \affil[3]{College of Computing and Data Science, Nanyang Technological University, Singapore}

\section{Introduction}
\label{sec:intro}

The procedural knowledge is the ability to understand and reason through how, why, and what sequence of steps are executed to accomplish a task. Procedural knowledge is fundamentally structured, sequential, and goal-oriented. It demands the handling of temporal dependencies and causal relationships in each step, while accounting for tool affordances and the functional intent behind every step.
In humans, this knowledge is developed via lifelong experience and hierarchical abstraction. This process facilitates high-level generalization, allowing individuals to adapt to environmental constraints (such as missing instrumentation), predict future states (such as mental time travel~\cite{suddendorf1997mental}), and provide explainable rationales for their actions~\cite{thoe2022developing}.
Replicating these sophisticated reasoning capabilities is critical for the advancement of autonomous systems. For embodied AI to achieve reliability in highly complex environments including household robotics, industrial maintenance, and clinical healthcare, it is well understood that they must transit from pattern recognition to a deep understanding of the procedural logic that can supports complex human tasks~\cite{ashutosh2024video,zhou2023procedure,nagasinghe2024not}.

%Interest in understanding procedural tasks is growing, driven by applications in cooking, machinery repair, medical procedures, and daily activities~\cite{ashutosh2024video,zhou2023procedure,nagasinghe2024not}, as reflected by platforms like WikiHow. Such tasks consist of sequences of steps, where each step serves a specific purpose and tools are used accordingly. Human understanding of these tasks arises from lifelong learning~\cite{thoe2022developing}, enriched by commonsense knowledge of objects and their affordances, as well as the ability to reason about temporal and causal dependencies within multi-step processes. This enables individuals to infer not just what to do, but why and how each step contributes to the overarching goal. 
%Emulating this procedural knowledge reasoning—a key aspect of human cognition—is essential for machines to assist in complex real-world tasks.

Recent advances in Vision–Language Models (VLMs) have demonstrated impressive performance on a wide range of multimodal reasoning tasks, including video understanding~\cite{li2024mvbench} and question answering~\cite{lei2018tvqa}. However, despite their strong pattern recognition and linguistic reasoning abilities, current VLMs largely rely on implicit, unstructured reasoning encoded within large neural networks. As a result, their reasoning processes are difficult to interpret, hard to constrain to domain-specific logic, and prone to hallucination when faced with incomplete or out-of-distribution procedural knowledge. This limitation becomes particularly critical in procedural domains, where correctness, reliability, and traceability of reasoning are as important as final accuracy.

Several recent efforts attempt to improve the transparency of LLM/VLM reasoning through prompting strategies such as chain-of-thought, tree-of-thought, and graph-of-thought reasoning~\cite{wei2022chain,besta2024graph,yao2023tree}. While these approaches encourage models to externalize intermediate reasoning steps, the underlying reasoning variables remain unconstrained and are not grounded in explicit symbolic structures or explicit logical reasoning. Consequently, reasoning and execution are still tightly coupled within the model, limiting the ability to verify, debug, or systematically improve reasoning behaviour, especially when domain-specific procedural constraints must be respected. Consequently, hallucinations and under explained results are still potential to occur. 
%\IEEEpubidadjcol

To address this, methods like ViperGPT~\cite{suris2023vipergpt} decouple reasoning from execution by generating executable programs. 
In parallel, neuro-symbolic (NS) approaches aim to combine the flexibility of neural representations with the structure and interpretability of symbolic reasoning. However, many existing NS systems either rely on manually defined symbolic operators, which do not scale well to large procedural domains, or on fixed knowledge graph embeddings that struggle with compositional multi-hop reasoning and uncertainty propagation. Moreover, most prior NS approaches have focused on static scenes or short-horizon reasoning, leaving procedural, multi-step, temporally grounded reasoning in videos under explored~\cite{johnson2017inferring,mascharka2018transparency,andreas2016learning,perez2018film,hudson2019learning,hudson2018compositional,chen2021meta,endo2023motion,li2025imore}.

To address these challenges, we propose Knowledge Module Learning (KML), a neuro-symbolic framework that explicitly models procedural relations as learnable neural modules and composes them into executable reasoning programs generated by large language models. Unlike black-box VLM reasoning, KML separates what reasoning steps should be performed (program synthesis) from how each step is executed (relation-specific neural knowledge modules). Each module corresponds to a well-defined relation in a procedural knowledge graph (PKG) and is trained to map between entity types (e.g., steps to tools, tools to purposes) using contrastive learning.

This design yields several key advantages. First, reasoning is explicit and traceable, producing interpretable intermediate states that correspond to meaningful procedural entities. Second, reasoning is constrained by the schema of the PKG, ensuring alignment with domain knowledge while still allowing flexibility through neural generalization. Third, KML naturally supports uncertainty-aware multi-hop reasoning, enabling it to handle ambiguous grounding and incomplete observations—common challenges in real-world video understanding.

To rigorously evaluate procedural knowledge reasoning, we introduce PKR-QA, a large-scale benchmark constructed from a curated procedural knowledge graph integrating instructional video annotations, common-sense knowledge, and LLM-assisted extraction followed by human verification. Unlike existing video QA benchmarks that emphasize visual recognition or situational common-sense~\cite{wu2024star,yu2019activitynet,li2024mvbench,xu2017video,wang2024sok}, PKR-QA focuses on task-centric procedural reasoning, requiring models to infer purposes, anticipate steps, reason over alternatives, and generalize across tasks.

Finally, beyond empirical performance, we provide a theoretical analysis of KML, establishing sufficient conditions under which neural knowledge modules learn relation mappings that separate valid from invalid entities and deriving bounds on reasoning error under multi-step composition. This theoretical grounding, combined with extensive ablations and qualitative analyses, positions KML as a principled and scalable approach to trustworthy procedural reasoning.

This is an extended version of our AAAI 2026 paper~\cite{nguyen2026pkr}.
We have included additional theoretical analysis of KML which allows us to better understand the capabilities of KML.
Further, we have included additional ablation studies showing the impact of each component in our VQA architecture.
We also extended the KML to perform logic-based operators such as logical AND OR and NOT and evaluate the performance.

\section{Related Work}
\label{sec:related}
\begin{figure*}[t]
    \centering
    \includegraphics[width=.9\textwidth]{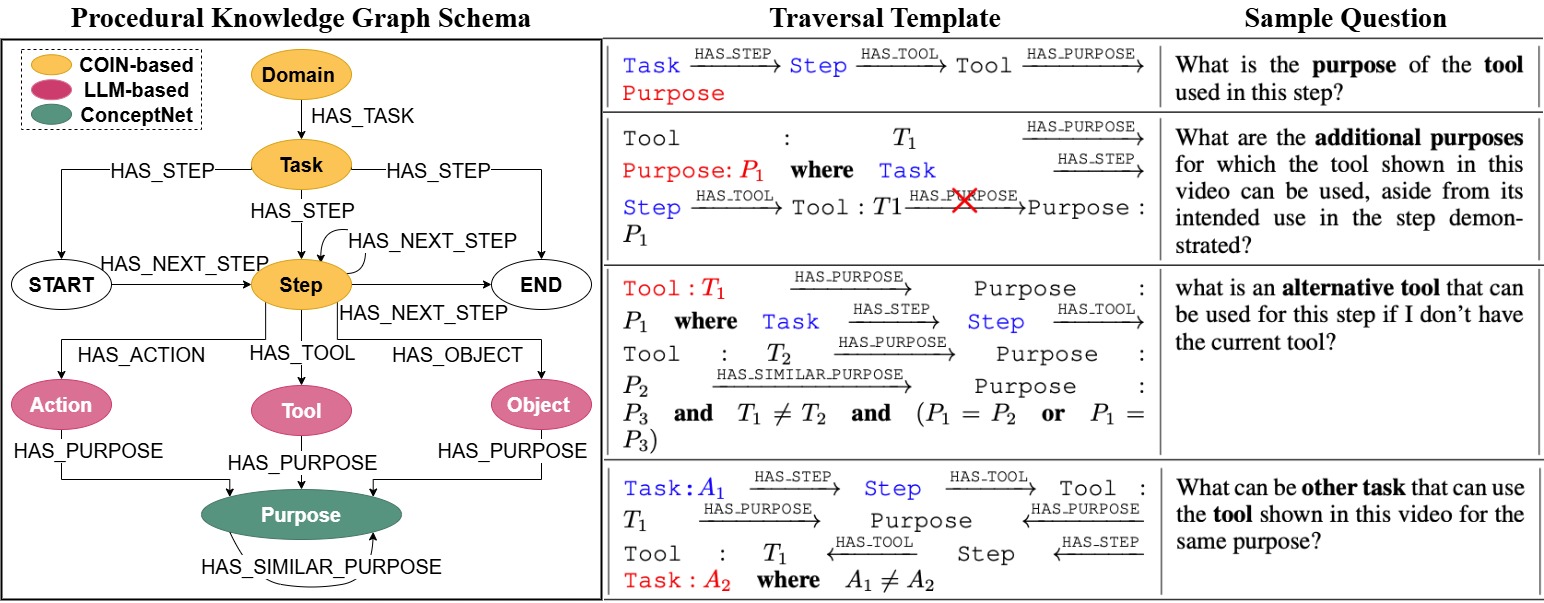}
    \caption{(Left) Schema of the Procedural Knowledge Graph (\pkg) showing the high-level abstraction of \pkg. In the middle are examples of Traversal Templates that define reasoning patterns over \pkg to generate question-answer pairs. Corresponding example questions are shown on the right. In the traversal templates, \textcolor{blue}{blue text} indicates information grounded in the input video, while \textcolor{red}{red text} denotes the target answer node. 
    % \textcolor{red}{[TODO: make higher quality figure]}
    }
    \label{fig:dataset_creation}
\end{figure*}

\subsection{Procedural and Instructional Video Understanding}

Understanding instructional and procedural videos has been an active area of research in computer vision, with early works focusing on representation learning from narrated videos and step recognition~\cite{miech2020end,ashutosh2024video,zhou2023procedure,zhong2023learning, nagasinghe2024not,chang2020procedure,sun2022plate}.
More recently tasks such as goal-inference~\cite{ee2025deduce,wan2025infer,roy2024predicting} for procedural tasks in both robotics and vision are getting popular.
Datasets such as COIN~\cite{tang2019coin} and ActivityNet-QA~\cite{yu2019activitynet} have enabled progress in recognizing steps, actions, and temporal structure in instructional content. Subsequent works introduced procedure-aware representations and planning-oriented models that explicitly model step transitions, task graphs, and long-horizon dependencies.
While these approaches significantly advance procedural perception, they primarily focus on recognizing or predicting steps rather than reasoning over explicit procedural knowledge such as tool affordances, purposes, or cross-task generalization. In contrast, our work targets procedural knowledge reasoning, where models must integrate observed video evidence with external structured knowledge to answer complex, multi-hop questions.

% KBQA
\subsection{Knowledge-based Visual Question Answering} 

Knowledge-based visual question answering has become popular in recent years~\cite{marino2019ok,chang2022webqa,park2020visualcomet,schwenk2022okvqa}. The majority of the past benchmarks on knowledge-based visual question answering are for images while more recently authors in ~\cite{wang2024sok} presented a benchmark to evaluate situated and open-world common-sense reasoning in videos with compositionality, temporality, and causality. While the motivation of our work shares similarities with~\cite{wang2024sok}, there are notable distinctions. Their focus lies in situated commonsense reasoning grounded in the specific contexts depicted in videos, whereas we concentrate on testing the procedural understanding of models given some partial information such as a step of a task.

\subsection{Neuro-Symbolic and Modular Reasoning}

Neuro-symbolic reasoning frameworks aim to bridge neural perception with symbolic structure~\cite{de2020statistical,abbe2022learning}. Modular networks, neural state machines~\cite{hudson2019learning}, and program-based reasoning models enable compositional reasoning by assembling neural components according to predicted programs~\cite{johnson2017inferring,mascharka2018transparency,andreas2016learning,perez2018film,hudson2019learning,hudson2018compositional,chen2021meta,endo2023motion,li2025imore}. More recent systems, such as ViperGPT~\cite{suris2023vipergpt} and related approaches, leverage LLMs to generate executable programs that invoke visual or symbolic functions.
Despite their promise, existing modular systems typically rely on predefined, hand-engineered functions or operate over limited sets of symbolic operators. They also lack mechanisms to learn domain-specific relational knowledge directly from data. In contrast, KML learns relation-specific neural modules directly from a procedural knowledge graph, allowing it to scale to rich procedural domains while maintaining interpretability.

\subsection{Knowledge Graph Embeddings and Reasoning}
Knowledge graph embedding methods such as TransE~\cite{bordes2013translating}, TransH~\cite{wang2014knowledge}, and RotatE~\cite{sun2019rotate} represent entities and relations in continuous vector spaces and support link prediction and multi-hop reasoning through embedding composition. 
Mostly, these methods only learn the embeddings using some geometric properties and assumptions of the vector spaces that they operate.
In contrast KML learns neural networks to represent relations as mapping functions, and entities as embeddings as that operates through relations as mapping functions.
The closest idea to our work is the pioneering work of NTN~\cite{socher2013reasoning}. However, NTN is only able to predict a confidence score for a given triplet and multi-hop reasoning becomes challenging with it's use of complex bilinear neural network models to encode relations.
Furthermore, these early methods are effective for knowledge completion, yet they are not designed to integrate perceptual grounding or to provide interpretable intermediate reasoning steps.

KML differs fundamentally from KG embedding approaches by treating relations as neural mapping functions rather than algebraic operators, enabling uncertainty-aware reasoning and seamless integration with multimodal grounding. Furthermore, KML’s execution traces align with symbolic reasoning paths, offering interpretability that embedding-based approaches lack.

\subsection{LLM-Based Reasoning and Program Synthesis}
LLMs have demonstrated strong reasoning capabilities through prompting strategies such as chain-of-thought~\cite{wei2022chain}, tree-of-thought~\cite{yao2023tree}, and graph-of-thought~\cite{besta2024graph}. These methods improve reasoning transparency but do not enforce semantic constraints on intermediate steps, often leading to hallucinations or inconsistent reasoning in structured domains.

Large VLMs such as Flamingo~\cite{alayrac2022flamingo}, Video-ChatGPT~\cite{maaz2023video}) and \textbf{neurosymbolic} frameworks like ViperGPT~\cite{suris2023vipergpt}, MoreVqa~\cite{min2024morevqa} and~\cite{choudhury2024video} decouple reasoning (e.g., program generation) from execution (e.g., API calls). While ViperGPT uses predefined functions for visual queries, it lacks mechanisms to learn domain-specific knowledge modules or constrain reasoning to procedural logic. 

KML leverages LLMs for program synthesis, but constrains execution to a predefined set of learned knowledge modules aligned with a procedural schema. This decoupling allows KML to benefit from the reasoning flexibility of LLMs while ensuring correctness, interpretability, and alignment with domain knowledge.

\subsection{Integrating External Knowledge with Video Understanding}
External Knowledge has been explored through knowledge graphs (KGs) for tasks like activity recognition~\cite{ma2022visual,ghosh2020all} and visual commonsense reasoning~\cite{lin2019kagnet}. 
VidSitu~\cite{sadhu2021visual} links events to semantic roles, while TVQA+\cite{lei2019tvqa+} uses script knowledge for story-based QA. However, these works focus on descriptive reasoning rather than procedural reasoning.

\subsection{Logical queries}
Our work is also related to NLP research such as~\cite{abdelaziz2021semantic,yang2023llm,zhong2024synthet2c,shah-etal-2024-improving}, which employ \textbf{logical queries} to extract information from a KG. This emphasis on curated, domain-specific knowledge is especially valuable for AI assistants operating in contexts where domain expertise, rather than general common-sense reasoning, is crucial for success. 

\section{Procedural Knowledge Graph Construction}
Motivated by recent advances in the semi-automated construction of knowledge-based question answering datasets \cite{hoang2024semi}, we introduce the \textit{\dlong} (\dataset) dataset. 
It is built on a \textit{Procedural Knowledge Graph} (\pkg) that integrates information from the COIN training set~\cite{tang2019coin}, the COIN ontology~\cite{tang2019coin}, GPT-4o-generated annotations, and external common-sense knowledge from ConceptNet~\cite{speer2017conceptnet}, followed by human verification.
To enable systematic QA generation, we define a set of question templates and associated Cypher queries~\cite{francis2018cypher}, which are executed over the knowledge graph to retrieve correct answers. In the following sections, we detail the construction of both the knowledge graph and the QA dataset.

\subsection{Defining \pkg's Schema (\kgs).}
A knowledge graph schema provides a high-level abstraction of the graph, specifying the types of entities that exist and the valid relationships that can occur between them. It serves as a blueprint for structuring the data and interpreting the semantics of the graph. In our case, \kgs contains nodes where each node correspond to a entity type ($\mathcal{E}$) and edges represent relation types ($\mathcal{R}$) (\Cref{fig:dataset_creation}). 
The core entity types in \kgs include \textit{Domain}, \textit{Task}, \textit{Step}, \textit{Action}, \textit{Object}, \textit{Tool}, and \textit{Purpose}. Relation types capture meaningful procedural links, such as task-step associations or tool usage (e.g. \texttt{HAS\_TOOL}), and may carry attributes like \textit{id}, \textit{type}, or additional semantic metadata.
The detailed descriptions and definitions of each entity and relation is given in~\Cref{sec.detailed.pkg}.

\subsection{Populating \pkg.}
Based on this schema, we instantiate \pkg by populating it with specific entity and relation instances. Each entity $e \in \mathcal{E}$ has three main attributes: \textit{type}, \textit{name}, and \textit{id}. Relations $r \in \mathcal{R}$ represent specific connections between entity instances. To construct this graph, we integrate annotations from the COIN dataset~\cite{tang2019coin}, fine-grained information extracted using GPT-4o~\cite{hurst2024gpt}, and commonsense knowledge from ConceptNet~\cite{speer2017conceptnet}. 

\textbf{COIN-based Data} (\texttt{Domain, Task, Step}):
We extract the procedural structure from the training split of the COIN dataset, which provides annotations for \textit{domains}, \textit{tasks}, and \textit{steps}. For each unique instance of these entities, we create a node in the graph. We define \texttt{HAS\_TASK} edges to link each domain to its associated tasks, and \texttt{HAS\_STEP} edges to connect tasks to their constituent steps. To capture procedural flow, we analyze step sequences from all training videos and construct \texttt{HAS\_NEXT\_STEP} relations that represent the temporal ordering of steps. We also introduce \texttt{START} and \texttt{END} nodes to explicitly mark task boundaries. Each \texttt{HAS\_NEXT\_STEP} edge is annotated with its observed frequency to model empirical transition likelihoods between steps.

\textbf{LLM-Augmented Data} (\texttt{Action, Object, Tool}):
To enhance procedural specificity, we use GPT-4o\footnote{\url{https://openai.com/index/hello-gpt-4o/}} to extract action-object pairs from step descriptions. For instance, the phrase ``remove the tire'' yields the action ``remove'' and the object ``tire''. Because COIN lacks explicit tool annotations, we further prompt GPT-4o to infer potential tools for each step. 
The prompt that we use is shown in~\Cref{fig:extractingprompt} (Appendix).
The tool lists are manually verified. We standardize terms (e.g., ``scissor'' $\rightarrow$ ``scissors''), consolidate duplicates (e.g., ``marker'', ``marker pen''), and unify task-specific variants (e.g., ``toilet detergent'' $\rightarrow$ ``detergent''). Some task-sensitive distinctions are preserved (e.g., ``filter'' $\rightarrow$ ``coffee filter'') when relevant to the procedural context.
% We use GPT-4o from OpenAI\footnote{\url{https://openai.com/index/hello-gpt-4o/}} to extract the action and object based on the text annotation for steps. 
 
%

\textbf{ConceptNet-based Data} (\texttt{Purpose}):
In procedural tasks, actions are typically performed—and tools or objects used—with specific purposes. To represent such intent in our graph, we incorporate commonsense knowledge from ConceptNet, a large-scale semantic network that connects words and phrases through meaningful, human-readable relations such as \texttt{UsedFor}, \texttt{CapableOf}, and \texttt{IsA}. Specifically, we extract potential purposes of actions, tools, and objects using the \texttt{UsedFor} and \texttt{CapableOf} relations.
To contextualize these purposes within specific procedural steps and tasks, we use GPT-4o to infer task- and step-specific interpretations for each entity’s purpose. This step ensures that the resulting knowledge is not generic but grounded in the procedural context in which the entity is used. (See ~\Cref{fig:purpose_grounding} -- Appendix)
% We use the following prompt as shown in ~\Cref{fig:purpose_grounding}.

To reduce redundancy and merge semantically similar purposes, we compute pairwise cosine similarities of Sentence-BERT embeddings \cite{reimers-2019-sentence-bert} and apply a similarity threshold of 0.8, validated through manual inspection. Pairs of highly similar purposes are linked using a \texttt{HAS\_SIMILAR\_PURPOSE} edge. The resulting purpose-augmented graph is stored in Neo4j\footnote{\url{https://neo4j.com/}}, supporting efficient querying and reasoning via Cypher. All contents are manually verified to ensure the reliability of the KG. The constructed KG contains 2,954 unique entities and 12,484 relations. 
The distributions of entities and relations are shown in Figure~\ref{fig:kg_statistic} of the supplementary.

%%%%%%%%%%%%%%%%%%%

\subsection{Detailed definition of \pkg}
\label{sec.detailed.pkg}

% \begin{figure}[t]
%     \centering
%     \includegraphics[width=.75\linewidth]{figures/schema_v2-v6.pdf}
%     \caption{The Procedural Knowledge Graph Schema (\kgs) includes Entities and the potential Relations between them. The instances are populated using annotations from COIN, LLM-generated data, and information from ConceptNet.}
% \label{fig:ourkg}
% \end{figure}
\Cref{fig:dataset_creation}(left) shows \kgs, the schema of our procedural knowledge graph, \kg.
\kgs consists of 9 nodes (entities, $\mathcal{E}$) and 14 edges (relations, $\mathcal{R}$) connecting these nodes.
The nodes represent various types of entities, each capturing a key aspect of procedural knowledge, including:
\begin{itemize}[leftmargin=0pt]
\setlength\itemsep{0em}

    \item[] \textbf{Domain}: contains 12 distinct domains which represent the highest-level category or field under which a set of related tasks are grouped.

    \item[] \textbf{Task}:  represents a specific activity or goal within a domain that requires a set of coordinated steps to complete. Tasks provide the contextual framework for steps, defining the objective that the steps aim to achieve. It bridge the gap between the high-level domain and the detailed procedural steps.
    
    \item[] \textbf{Step}: detailing the individual operations necessary to accomplish a specific task. Steps are sequentially organized to capture the logical flow required for successful task completion.

    \item[] \textbf{Action}: represents atomic operation performed as part of a step. Actions are more granular levels that detail how a step is executed. For example, actions such as ``\textit{scrub}'', ``\textit{fry}'', or ``\textit{pump up}'' describe the specific motions or manipulations required in each step. 

    % \item[] \textbf{Tool}: represents a specific tool directly visible or grounded in the video from \cite{tang2019coin}, that is used to assist in performing a step. These tools are not just generic but are tied to the visual evidence in the procedural context. For example, a ``\textit{sponge}'' and ``\textit{cleaning solution}'' for the step ``\textit{scrub the bathtub}'' under task ``\textit{clean bathtub}'' are grounded tools. 

    \item[] \textbf{Tool}: represents a specific tool that can be used to assist in performing a step. These tools are not just generic but are tied to the visual evidence in the procedural context. For example, a ``\textit{sponge}'' and ``\textit{cleaning solution}'' for the step ``\textit{scrub the bathtub}'' under task ``\textit{clean bathtub}'' are tools. 

    \item[] \textbf{Object}: refers to the item being manipulated or acted upon in a step. Objects are central to the procedural actions as they undergo changes or are used as part of the process. Using previous example, in the step ``\textit{scrub the bathtub}'', the object is the bathtub.

    \item[] \textbf{Purpose}: captures the intention or function behind a specific grounded tool within the context of a task and step. It is important for answers the ``\textit{what}'' tool can be used in the step if we want to achieve a certain task. For example, the ``\textit{sponge}'' has purpose of ``\textit{cleaning}'' under step ``\textit{scrub the bathtub}''. This node type enhances the knowledge graph's interpretability by explicitly associating tools with their roles in achieving procedural goals.

    \item[] \textbf{START} and \textbf{END}: two dummy nodes that connect to the starting step and ending step, respectively. These two entities are used to identify starting step and ending step of a task. 
\end{itemize}

The edges in \kgs capture interactions, dependencies, and hierarchies between entities. These relations include:

\begin{itemize}[leftmargin=0pt]
\setlength\itemsep{0em}

    \item[] \textbf{HAS\_TASK}: links a \textbf{Domain} node to its associated \textbf{Task} nodes. It represents the grouping of tasks under a domain.
    
    \item[] \textbf{HAS\_STEP}: connects a \textbf{Task} node to its associated \textbf{Step} nodes, capturing the breakdown of the task.
    
    \item[] \textbf{HAS\_NEXT\_STEP}: capture the sequential order between \textbf{Step} nodes within a task. For example, in the task ``\textit{clean bathtub}'', the step ``\textit{scrub the bathtub}'' is followed by the step ``\textit{clean bathtub with water}''. This relation helps in modeling the progression of actions. 
    
    \item[] \textbf{HAS\_ACTION}: connects a \textbf{Step} node to its corresponding \textbf{Action} nodes, detailing the atomic operations required to execute the step.
    
    \item[] \textbf{HAS\_TOOL}: links a \textbf{Step} node to the \textbf{Tool} nodes that assist in performing the step.
    
    \item[] \textbf{HAS\_OBJECT}: links a \textbf{Step} node to the \textbf{Object} nodes that are manipulated or acted upon during the step.
    
    \item[] \textbf{HAS\_PURPOSE}: connects a \textbf{GroundedTool} node to a \textbf{Purpose} node, capturing the function or intent of the tool in the procedural context.
    
    \item[] \textbf{HAS\_SIMILAR\_PURPOSE}:  links two \textbf{Purpose} nodes that share similar intents or functions. For instance, the purposes ``\textit{carry liquids}'' and ``\textit{hold liquids}'' are linked with this relation.

\end{itemize}

The structure of our PKGS not only captures the hierarchical structure and temporal relationships between various components of procedural knowledge extracted from \cite{tang2019coin}, but also extends its expressiveness and utility by incorporating additional layers of information. By enriching the graph with detailed node types and their interconnections, our PKGS provides a more holistic representation of procedural knowledge. 

\section{Question Answering Dataset Creation}
\dataset is a multiple-choice question answering dataset designed to evaluate reasoning over procedural knowledge in instructional videos. Each instance consists of a video segment, a question, five answer choices, and one correct answer. Unlike existing video QA datasets that focus on grounding answers directly in the visual content of a single video, \dataset emphasizes reasoning over procedural knowledge that extends beyond the given video.

\subsection{Traversal Templates for Procedural Reasoning}

We define questions in \dataset based on traversal templates which are specific reasoning patterns over \pkg. For instance, the traversal
$\texttt{Step} \xrightarrow{\texttt{HAS\_TOOL}} \texttt{Tool} \xrightarrow{\texttt{HAS\_PURPOSE}} \texttt{Purpose}$
corresponds to the question: “What is the purpose of the tool used in this step?”. We design 17 such traversal templates to cover a wide range of procedural reasoning types. For each template, we generate multiple question variants using GPT-4o-mini, followed by manual filtering to ensure quality and clarity. These templates form the backbone for producing diverse yet semantically consistent question-answer instances.
~\Cref{fig:dataset_creation} presents examples of Traversal Templates alongside their corresponding questions. 
The complete set of 17 question templates with example questions is provided in the \Cref{tab:dataset} - Appendix.

\subsection{Question and Answer Generation}
We generate questions by aligning each video segment with its corresponding \texttt{Task} and \texttt{Step} nodes in \pkg, then apply traversal templates to form questions and retrieve answers. For each question, we sample one correct answer and four distractors. 
%To ensure balanced answer distributions and reduce bias, we apply sampling strategies that equalize the frequency of each answer candidate across correct and incorrect options. 
When generating answer options for each question, relying solely on random sampling can lead to biased options. For instance, frequent terms (e.g., tasks, grounded tools, or objects) are more likely to appear in the correct options compared to infrequent terms, which tend to appear only in the incorrect options. This creates a shortcut for models to identify the correct answer without engaging in true information retrieval and reasoning. To address this issue, we design tailored sampling strategies for each traversal template, ensuring a relative-balanced distribution of terms between correct and incorrect answer options.

Additionally, each question is paired with a Cypher query that retrieves supporting facts from \pkg. These structured annotations serve as logical forms for KG-based evaluation, model supervision, and reasoning trace analysis.

\dataset is designed for scenarios with limited training data but having access to structured knowledge, such as a knowledge graph. We construct a training set of 1,700 samples (100 per traversal template) and a validation set of 850 samples (50 per template). The test set contains 46,921 questions, generated from all video segments in the COIN test split. 
This setup supports zero-shot and few-shot generalization. 

To assess the quality of our dataset, we evaluate whether a question can be reasonably answered by a human—referred to as \textit{plausibility}. We conducted a case study with eight participants, each answering a subset of 170 questions. For each question, participants were shown a video segment, the question, and five answer options, and were asked to select the correct answer. Each question was independently answered by three participants. 
A question is considered plausible if at least one participant selected the correct answer. Using this criterion, we found that 92.4\% of the questions are plausible. 
Additionally, random baselines perform at chance level ($\sim 20$\% accuracy), suggesting low annotation artifacts or answer biases.

\subsection{Rephrasing Question Templates}

To ensure a diverse range of question phrasings in our dataset, we performed a rephrasing process. Specifically, we prompted GPT-4o-mini to generate up to 30 semantically diverse and contextually appropriate rephrasing for each traversal-based natural language question template. This step leverages the LLM’s capability to enhance linguistic variety while preserving meaning. To ensure quality and consistency, all rephrased questions were manually reviewed to confirm that the original intent was retained. The prompt is shown in in~\Cref{fig:prompt.rephrasing} - Appendix.

\section{Knowledge Module Learning}
\label{sec.module}
In this section, we introduce our \textit{Knowledge Module Learning} (\kml) for procedural knowledge reasoning, which effectively handles uncertain (probabilistic) inputs while producing interpretable reasoning outputs using a small number of trainable parameters. 
We train a collection of neural networks known as Neural Knowledge Module (\textbf{KM}) to represent each binary relation type of \pkg. 
Then given a video and a question, we ask LLM to generate a program consisting of KM invocations to answer the questions. Then we execute those KMs sequentially with the grounded evidence extracted from the video to obtain the final answer. 
\Cref{fig:model} shows the overall of our Knowledge Module Learning (KML) framework.
Next, we present the details. 

\begin{figure}[t]
   \centering
   \includegraphics[width=.9\linewidth]{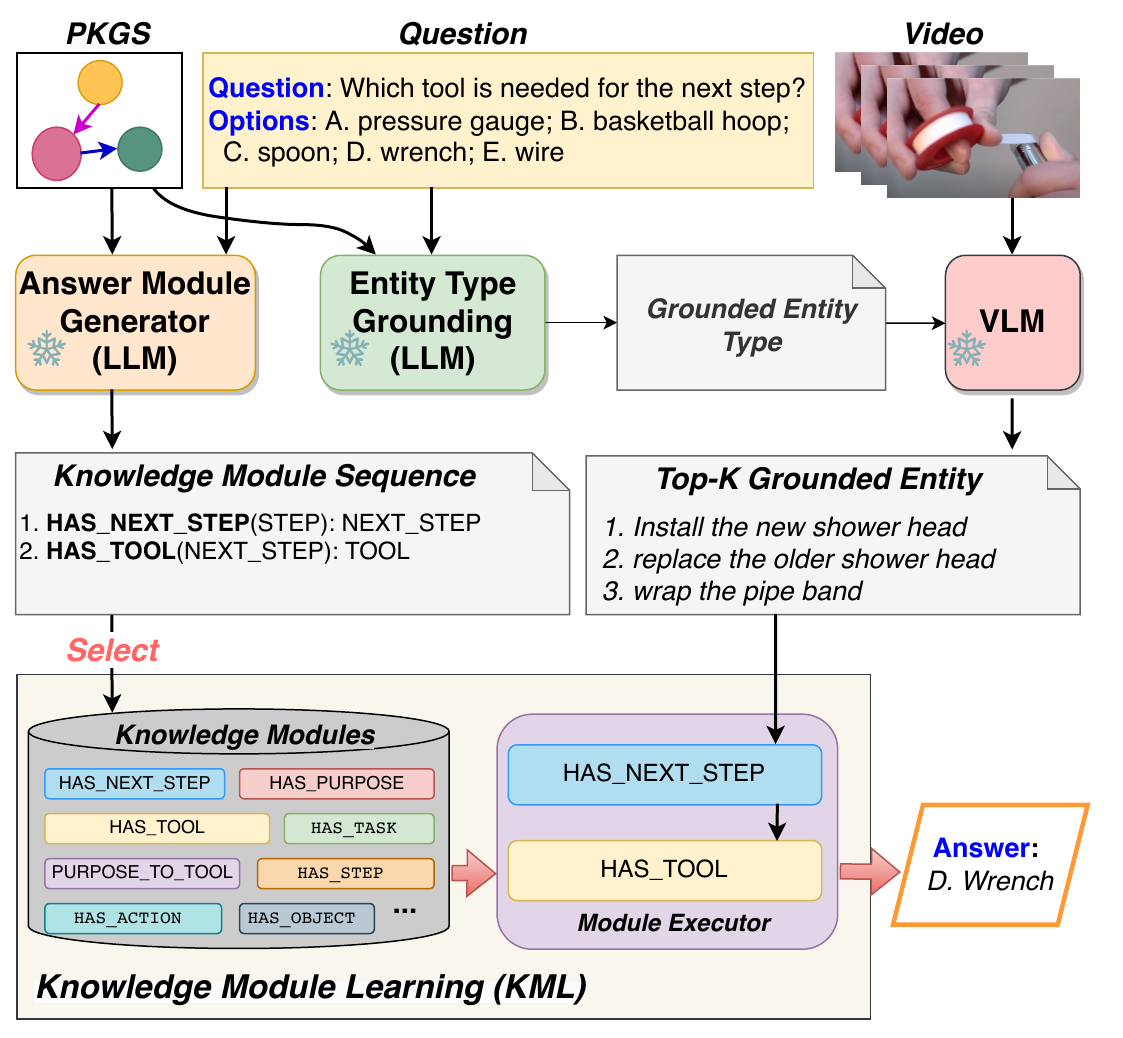}
   \caption{Knowledge Module Learning (\skml) for knowledge graph-based procedural video question answering. Given a video and a question, the framework first identifies the entity type in \pkg (e.g. ``step'')  that should be detected in the video and then recognizes the entity type instances (e.g., step: ``\textit{wrap the pipe band}'') using a VLM. Based on the question and grounded entity type, the \textit{answer module generator} produces the module sequence. Finally, \skml executes the sequence of Knowledge Modules to answer the question. In this example, it first identifies the next step and then determines the tool required for that step, which is a ``\textit{wrench}''.}
\label{fig:model}
\end{figure}

\subsection{Knowledge Module Learning}
For each binary relation type $\mathcal{R}_k(\mathcal{E}_{i}, \mathcal{E}_{j})$ in the \pkg that maps from entity type $\mathcal{E}_{i}$ to $\mathcal{E}_{j}$, we learn a KM as a learnable neural relation ($\phi_{{\mathcal{R}}_{k}}$) with parameters ($\theta_{{\mathcal{R}}_{k}}$).
KM learns to map from a given head entity $e_i$ to the corresponding set of tail entities $\mathcal{I}_{j}$ where $\mathcal{I}_{j} \subseteq \mathcal{E}$ of relation $\mathcal{R}_k$ as follows:
\begin{equation}
     \phi_{{R}_{k}} (~h(e_{i};~\theta_h);~\theta_{{R}_{k}}) \rightarrow \mathbf{e_j}   
     \label{eq.mapping}
\end{equation}
where $h(e_{i};\theta_h)$ is a d-dimensional embedding function. 

The entity $e_{i}$ is mapped to $\mathbf{e_j}$, a vector that is closer to $x_j = h(e_{j})$ for all $e_j \in \mathcal{I}_{j}$ (the corresponding set of tail entities).
Formally, $\mathcal{I}_{j}$ is the set of tail entities under relation $R_k$ for the head entity $e_{i}$,~i.e.~$\mathcal{I}_{j} := \left\{ e_j \in \mathcal{E} \;\middle|\; R_k(e_i, e_j) = True \right\}$.

%Then the objective of KM learning is to  learn $(\theta_{{R}_{k}}, \theta_x)$ (and $x(\cdot;\theta_x)$) such that for each head entity embedding $x(e_i)$ can be mapped to a vector $\mathbf{e}_j$ that is representative of all corresponding tail entities $\mathcal{I}_{j}$.
The objective of Knowledge Module Learning (KML) is to learn a relation-specific neural mapping
function $\phi_{R_k}$ (cf.~\Cref{eq.mapping}) such that, given a head entity $e_i$, the mapped
representation $\phi_{R_k}(h(e_i))$ is closer (under cosine similarity) to embeddings of all
valid tail entities than to embeddings of invalid entities.

Let $ X = \{x_1, x_2, \dots, x_n\} \subset \mathbb{R}^D$
denote the set of learned entity embeddings ($x_j = h(e_j, \theta_h)$) in the knowledge graph, where each $x \in X$ is $\ell_2$-normalized, i.e., $\|x(e)\|_2 = 1$.
For a fixed relation $R_k$ and a head entity embedding $x \in X$, we define
$Y(x) \subseteq X$: the set of \emph{positive tail entity embeddings} satisfying    $(e_i, R_k, e_j)$.
Let $U(x) := X \setminus Y(x)$: the set of \emph{negative} tail entity embeddings and
$z := \phi_{R_k}(x)$: the output of the relation-specific knowledge module where
$\hat z := z / \|z\|_2$ is the normalized module output. We define cosine similarity as $
s(a,b) := \langle a, b \rangle, \qquad \|a\|_2 = \|b\|_2 = 1,$
so that $s(a,b) \in [-1,1]$.

We iterate over all the triplets (including the inverse triplets) of all relation types of \pkg in a batch-learning manner and train the KMs using the contrastive loss.

\begin{equation}
\mathcal{L}(x)
\;=\;
- \log
\frac{\sum\limits_{y \in Y(x)} \exp\!\big(\frac{s(\hat z, y)}{\tau}\big)}
{\sum\limits_{v \in X} \exp\!\big(\frac{s(\hat z, v)}{\tau}\big)}.
\label{eq:multi_pos_loss}
\end{equation}

% \begin{equation}
%     -log \frac{ exp(\frac{\mathbf{e_j} \cdot x(e_{j})}{\tau})}{\sum_{\forall e_p \in \mathcal{B}} exp( \frac{\mathbf{e_j} \cdot x(e_{p})}{\tau}) }
% \label{eq.single.loss}
% \end{equation}
%where $\mathcal{B}$ is the set of tail entities in the batch of triplets. 
%Here, $e_j \in \mathcal{I}_{j}$ is one of the positive target entity of head entity $e_i$ under the relation $R_k$, and 
Here $\tau$ is the temperature.
%Contrastive loss is vital to learn this class of neural relations ($\phi_{{R}_{k}}$) that we known as KMs for symbolic binary relations of the form $\mathcal{R}_k(\mathcal{E}_{i}, \mathcal{E}_{j})$.
Contrastive loss plays a crucial role in learning neural relation functions $\phi_{{R}_{k}}$ (i.e. KMs), for modelling symbolic binary relations of the form $\mathcal{R}_k(\mathcal{E}_{i}, \mathcal{E}_{j})$ as later shown by the theoretical analysis on the learning of such mappings in~\Cref{sec:theory.one}.

The embedding learning function $h(;\theta_h)$ is implemented using CLIP~\cite{radford2021learning} text encoder embeddings with frozen parameters ($\theta_h$).
Alternatively, we also learn the embedding function from scratch using standard implementations\footnote{We learn these using torch.nn.Embedding}.
Similarly, we also learn the inverse KM for each inverse relation ${\mathcal{R}^{*}}_{k} ( \mathcal{E}_{j}, \mathcal{E}_{i} )$ of each relation $\mathcal{R}_{k}$.
Let us denote the set of all KMs by $\phi_{R} = \{ \phi_{{R}_{k}} | k = 1, \cdots \}$.

%Specifically, for each triplet in the batch, the KM predicts the target embedding and this predicted target embedding is compared with all positive/negative ground-truth embeddings of the batch, i.e. supervised-contrastive loss. More details in the supplementary.
% section~\ref{sec.sup.contrastive}.

At inference, KM takes an input embedding and maps that to an output embedding that represents a set of corresponding tail entities of that relation.  For example, \Cref{fig:q1} shows the semantic meaning of each output embedding that maps to a set of tail entities of relation \texttt{HAS\_TOOL} for given input embedding of the Step entity.
%We encode all KG knowledge into the Knowledge Modules to learn procedural knowledge.
We experimented with different neural configurations of multi-layered perceptron (MLP) and found that two-layered MLP with \textit{Tanh} activation performs the best for learning KMs.
As shown later in~\Cref{sec:det_comp_bound}, controlling the Lipschitz constant of the neural mapping function in KMs is crucial. Therefore, both the choice of activation functions and the design of the MLP are carefully considered.
Next, we present how to answer questions using KMs and LLM generated programs.

\subsection{Question Answering suing KML}

Given the video $v$, the question $Q$, and options $O=\{o_1, \cdots, o_n \}$ ($n=5$) we  prompt a LLM $\phi()$ to find the relevant entity type ($E_g$) that should be grounded in the video to answer the question.
\begin{equation}
    \phi(Q, \text{\kgs})\rightarrow E_g 
    \label{eq.eg}
\end{equation}
For example, $E_g$ can be a task, step, object or action.
Then we find the entity instance(s) that is present in the given video $V$ of the entity type 
$E_g$ using a vision foundation model (VLM) as follows:
\begin{equation}
    \text{VLM}(V, E_g)\rightarrow \mathbf{x_e} 
\end{equation}
where $\mathbf{x_e} \in \mathcal{R^{C}}$ represents the score vector for the entity type (e.g., step distribution) across all categories of that entity type. We assume there are $C$ categories for entity type $E_g$. 
To obtain estimates about the grounding entity $E_g$, we use \emph{ProceduralVRL} (P.VRL)~\cite{zhong2023learning} a VLM tailored for procedural tasks.

Given the collection of Knowledge Module names ($\phi_{R}$) and the question $Q$, and the grounded entity type $E_g$, we use LLM  (denoted by $\phi_G()$) to generate a list of sequence of Knowledge module invocations (known as program or $P_1$) to answer the question. 
Using a similar concept to chain-of-thought~\cite{wei2022chain,wang2022self}, tree-of-thought~\cite{yao2023tree}, and graph-of-thought~\cite{besta2024graph} we ask the LLM to generate multiple alternative programs to answer the same questions.
\begin{align}
\label{eq.getprogram}
    \phi_G(Q, \phi_{R} , E_g) = [P_1=\left< \phi_{{r}_{i}}, \phi_{{r}_{j}}, \phi_{{r}_{k}}, ... \right >, \cdot, & \\ \nonumber P_l = \left< \phi_{{r}_{i}}, \phi_{{r}_{j}}, \phi_{{r}_{k}}, ... \right >] 
\end{align}
Here $P_l$ is the specific module invocations and each $P_l$ may result in different answers following an alternative-thought of reasoning approach.
The LLM module $\phi_G()$ can be implemented with any LLM with good reasoning capability. 
We use a special prompt that invokes deep understanding and consistent knowledge graph traversal and alternative thought invocations.
We use LLM to model the ~\Cref{eq.eg} and ~\Cref{eq.getprogram} (See~\Cref{fig:programprompt} - Appendix).
% To implement the model in~\Cref{eq.eg} and ~\Cref{eq.getprogram}, we use LLM with the following prompt as shown in~\Cref{fig:programprompt}.
%

%
As an example, to answer the question \texttt{what is an alternative tool that can be used for this step?} it generates the  program shown in~\Cref{fig:llm.progra.example}.
\begin{figure}[t!]
\centering    
\includegraphics[width=0.99\linewidth]{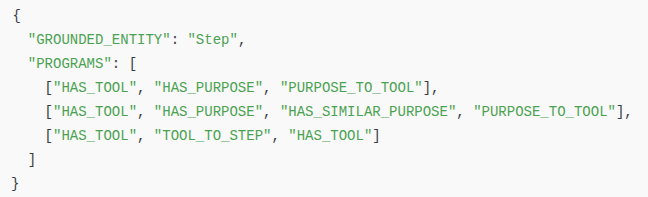}
\caption{Example of logical programs generated by LLM. 
% A program specifies the sequence in which knowledge modules are executed to produce the final output.
}
\label{fig:llm.progra.example}
\end{figure}
Here PURPOSE\_TO\_TOOL(Purpose) is the inverse relation of HAS\_PURPOSE(Tool).
LLM may generate multiple programs that leads to the answer, typically 3-5 alternatives for complex problems.
To predict the answer, we execute each program in order, inputting the grounded entity representation $z_i$ into the first module. For each program ($P_1$ to $P_l$), $z_i$ is fed into the first module to obtain intermediate embedding, invoking all modules sequentially.
For example $z_j = \phi_{{r}_{i}}(z_i)$ then $ z_k = \phi_{{r}_{j}}(z_j)$ and finally $z_f = \phi_{{r}_{k}}(z_k)$.
Therefore the \emph{final embedding} representing the answer to the given question is $z_f$.
We also inspect the meaning of each intermediate representation as shown in~\Cref{fig:q1} allowing more interpretable reasoning that can handle uncertain inputs.

%\begin{align}
    % z_j &= \phi_{{r}_{i}}(z_i) \\ \nonumber
    % z_k &= \phi_{{r}_{j}}(z_j) \\ \nonumber
    % z_f &= \phi_{{r}_{k}}(z_k) 
%\end{align}
To compute the first input embedding $z_i$, we use the grounded top-K entity instance and weight the embeddings of each grounded entity instance category name as follows:
\begin{align}
    z_i &= S \times {\mathbf{X}_e}^{T} \\
    {\mathbf{X}_e} &= [h(e_1), h(e_2), \cdots, h(e_k)]
\end{align}
where $e_k$ is the $k$-th category of the entity type $E_g$ and $S=[s_1, s_2, \cdots, s_k]$ is a vector containing top-k scores for each grounded entity-type category (e.g. Step categories).

\noindent
\textbf{Inference}
Given the options $O=\{o_1, \dots , o_n\}$, we obtain the embeddings of $O$, i.e., 
\begin{equation}
    h(O) = [h(o_1), h(o_2), \cdots h(o_n)]. 
\end{equation}

Then we compute the cosine similarity between the \emph{final embedding} $z_f$ and option embeddings $h(O)$ and apply softmax to predict the answer index for each program.
When there are alternative programs, we take the maximum score from all programs as the final answer.

\subsection{VQA Training for KML.}
We also fine-tune the KMs using a few examples of the video-question-answer using our dataset.
We compute the cosine similarity between $z_f$ and $h(O)$ and apply softmax to predict the answer scores.
\begin{equation}
    \hat{y}= softmax\left( cosine\_sim\left( h(O),  z_f\right)\right)
\end{equation}
Then we train all KMs jointly using the cross-entropy loss over answer score distribution where the correct answer is used as the ground truth label for cross-entropy training.
At test time, we predict the right answer using the argmax of the cosine similarity scores.
One of the advantages of KML (auto-program) approach is that, given the relation types, we do not need to manually select any neural modules, or generate programs manually. We use LLMs such as GPT, DeepSeek or Mistral.

\section{Theory}

\subsection{Theoretical Analysis of \textbf{KML} Learning}
\label{sec:theory.one}

%The objective of Knowledge Module Learning (KML) is to learn a relation-specific neural mapping function $\phi_{R_k}$ (cf.~\Cref{eq.mapping}) such that, given a head entity $e_i$, the mapped representation $\phi_{R_k}(x(e_i))$ is closer (under cosine similarity) to embeddings of all valid tail entities than to embeddings of invalid entities.

In this section, we provide a formal analysis of learning objective of KMs and derive conditions
under which the learned mapping satisfies the desired separation property.
The ideal outcome of KML learning for relation $R_k$ is that the mapped representation
$\hat z = \phi_{R_k}(x)$ separates positives from negatives:
\begin{equation}
\min_{y \in Y(x)} s(\hat z, y)
\;>\;
\max_{u \in U(x)} s(\hat z, u).
\label{eq:kml_separation}
\end{equation}
This condition ensures that even the least similar positive tail entity is closer to $\hat z$
than the most similar negative entity.

While KML is trained with contrastive loss as in~\Cref{eq:multi_pos_loss}.
For clarity in this theoretical analysis, we omit the temperature parameter, as it does not affect the ordering arguments.

Let us define
\[
A := \sum_{y \in Y(x)} e^{s_y},
\quad
C := \sum_{u \in U(x)} e^{s_u},
\quad
B := A + C,
\]
where $s_v := s(\hat z, v)$.

\begin{lemma}[Loss bound implies relative mass constraint]
\label{lem:mass_bound}
If $\mathcal{L}(x) \le \varepsilon$, then
\[
C \;\le\; (e^{\varepsilon} - 1)\,A.
\]
\end{lemma}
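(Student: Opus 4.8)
The plan is to unwind the definition of the contrastive loss (with the temperature dropped, as stipulated) and convert the hypothesis $\mathcal{L}(x) \le \varepsilon$ directly into an algebraic inequality between $A$ and $C$. The first observation I would make is that the partition $X = Y(x) \sqcup U(x)$ is disjoint and exhaustive, so the denominator of the loss is exactly $\sum_{v \in X} e^{s_v} = A + C = B$. Hence the loss collapses to $\mathcal{L}(x) = -\log(A/B) = \log(B/A)$, and substituting $B = A + C$ gives $\mathcal{L}(x) = \log\big(1 + C/A\big)$.

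From there the argument is a single monotonicity step. Since the logarithm is strictly increasing, the hypothesis $\log\big(1 + C/A\big) \le \varepsilon$ is equivalent, after exponentiating both sides, to $1 + C/A \le e^{\varepsilon}$. Rearranging yields $C/A \le e^{\varepsilon} - 1$, and multiplying through by $A$ (which is strictly positive) delivers the claimed bound $C \le (e^{\varepsilon} - 1)\,A$.

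The only technical point worth flagging is the positivity of $A$: each summand $e^{s_y}$ is strictly positive, so $A > 0$ provided $Y(x)$ is nonempty, which is guaranteed by the standing assumption that the head $x$ admits at least one valid tail under $R_k$. I do not anticipate any genuine obstacle here, since the statement is a direct algebraic consequence of the log-sum-exp structure of the loss; accordingly I expect the full proof to occupy only a few lines, and the same manipulation will later be convenient for translating loss guarantees into the separation property of~\Cref{eq:kml_separation}.
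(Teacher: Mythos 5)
Your proof is correct and follows essentially the same route as the paper's: both arguments convert $\mathcal{L}(x)\le\varepsilon$ into the equivalent inequality $A/(A+C)\ge e^{-\varepsilon}$ (which you write as $\log(1+C/A)\le\varepsilon$) and rearrange. Your explicit remark that $A>0$ requires $Y(x)\neq\emptyset$ is a small but worthwhile addition the paper leaves implicit.
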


\begin{proof}
From the definition of $\mathcal{L}(x)$,
\[
\mathcal{L}(x) \le \varepsilon
\;\;\Longleftrightarrow\;\;
\frac{A}{A+C} \ge e^{-\varepsilon}.
\]
Rearranging yields
\[
C \le (e^{\varepsilon}-1)\,A.
\]
\end{proof}

\begin{lemma}[Bounds on extreme similarities]
\label{lem:extreme_bounds}
The following inequalities hold:
\[
\min_{y \in Y(x)} s_y \;\ge\; \log A - \log |Y(x)|,
\qquad
\max_{u \in U(x)} s_u \;\le\; \log C.
\]
\end{lemma}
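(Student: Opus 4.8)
The plan is to prove the two inequalities separately, since they decouple. The negative-side bound is immediate: from $C = \sum_{u \in U(x)} e^{s_u}$, every summand is positive, so $C \ge e^{s_{u^\star}}$ where $u^\star$ is a maximizer of $s_u$ over $U(x)$; taking logarithms and using monotonicity of $\log$ gives $\log C \ge \max_u s_u$. Neither \Cref{lem:mass_bound} nor the $\ell_2$-normalization is needed for this half.

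For the positive side I would start from the elementary inequality $A = \sum_{y \in Y(x)} e^{s_y} \le |Y(x)|\, e^{\max_y s_y}$, obtained by bounding each summand by the largest, which on taking logarithms yields $\log A - \log|Y(x)| \le \max_y s_y$. The step I expect to be the main obstacle is that this aggregate argument controls the \emph{largest} positive similarity, whereas the lemma asserts the bound for the \emph{smallest} one, $\min_y s_y$. The mass $A$ by itself cannot push $\min_y s_y$ above $\log(A/|Y(x)|)$, since concentrating the mass on a single positive leaves the remaining ones arbitrarily small; indeed $\log(A/|Y(x)|)$ is the logarithm of the mean of the $e^{s_y}$ and hence always lies between $\min_y s_y$ and $\max_y s_y$, so the counting bound attaches naturally to the maximum and not to the minimum.

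To secure the stated bound on $\min_y s_y$ I would therefore bring in how the learned module behaves under the contrastive objective of \Cref{eq:multi_pos_loss}: valid tails are pulled toward a common neighbourhood of $\hat z$, so the positive similarities concentrate. Concretely, introducing a spread control $\max_y s_y - \min_y s_y \le \delta$ and combining it with the aggregate bound gives $\min_y s_y \ge \log A - \log|Y(x)| - \delta$, which recovers the lemma in the concentrated regime $\delta \to 0$. The crux that upgrades the elementary counting bound to the claimed statement about the minimum positive similarity is establishing that the trained module genuinely drives $\delta$ small, rather than merely controlling the aggregate mass $A$; this is where I expect the real work to lie, and it is the ingredient needed to feed a strong separation guarantee of the form in \Cref{eq:kml_separation} when combined with \Cref{lem:mass_bound}.
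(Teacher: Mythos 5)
Your negative-side argument is exactly the paper's: term-by-term positivity gives $C \ge e^{\max_{u} s_u}$, hence $\max_{u \in U(x)} s_u \le \log C$, and you are right that neither normalization nor \Cref{lem:mass_bound} is needed there.

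On the positive side, what you have flagged is not a gap in your proposal but a genuine error in the paper. The paper's proof writes $A = \sum_{y} e^{s_y} \ge |Y(x)|\,e^{\min_y s_y}$ and claims this ``implies'' $\min_y s_y \ge \log A - \log|Y(x)|$; the implication runs the wrong way, since rearranging gives $e^{\min_y s_y} \le A/|Y(x)|$, i.e.\ $\min_y s_y \le \log A - \log|Y(x)|$. Your diagnosis is precisely correct: $\log\bigl(A/|Y(x)|\bigr)$ is the logarithm of the mean of the $e^{s_y}$, so it always lies between $\min_y s_y$ and $\max_y s_y$, and the lemma's lower bound on the minimum is false whenever the positive similarities are not all equal. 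The counting argument legitimately bounds only the maximum, $\max_y s_y \ge \log A - \log|Y(x)|$, which is the bound your proposal derives. Note that the flaw propagates to \Cref{thm:kml_separation}: with two positives at similarities $1$ and $-1$ and one negative at $0$ (realizable with unit vectors in $\mathbb{R}^2$), one has $\mathcal{L}(x) \approx 0.281 < \log(3/2) \approx 0.405$, yet $\min_y s_y = -1 < 0 = \max_u s_u$, so the separation in \Cref{eq:kml_separation} fails below the stated loss threshold. What does survive, using the corrected lemma with the same threshold, is the weaker guarantee $\max_y s_y > \max_u s_u$, i.e.\ the top-ranked entity under $\hat z$ is a valid tail. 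Your proposed repair via a spread parameter $\delta := \max_y s_y - \min_y s_y$, yielding $\min_y s_y \ge \log A - \log|Y(x)| - \delta$, is sound, and you correctly locate the real burden: the loss in \Cref{eq:multi_pos_loss} alone does not control $\delta$ (the counterexample above has small loss and $\delta = 2$), so restoring full separation requires an additional concentration assumption or argument about the positives that the paper does not supply.
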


\begin{proof}
For the positive set,
\[
A = \sum_{y \in Y(x)} e^{s_y}
\;\ge\;
|Y(x)|\, e^{\min_{y \in Y(x)} s_y},
\]
which implies
\[
\min_{y \in Y(x)} s_y \ge \log A - \log |Y(x)|.
\]
For the negative set,
\[
C = \sum_{u \in U(x)} e^{s_u}
\;\ge\;
e^{\max_{u \in U(x)} s_u},
\]
which implies
\[
\max_{u \in U(x)} s_u \le \log C.
\]
\end{proof}

\begin{theorem}[Sufficient condition for KML separation]
\label{thm:kml_separation}
If
\begin{equation}
\mathcal{L}(x)
\;<\;
\log\!\left(1 + \frac{1}{|Y(x)|}\right),
\label{eq:kml_loss_bound}
\end{equation}
then the separation condition in \Cref{eq:kml_separation} holds.
\end{theorem}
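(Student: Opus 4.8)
The plan is to chain the two preceding lemmas directly, since together they convert the logarithmic loss bound into the desired ordering of extreme similarities. First I would set $\varepsilon := \log\!\left(1 + \tfrac{1}{|Y(x)|}\right)$ and observe that the hypothesis $\mathcal{L}(x) < \varepsilon$ gives $e^{\mathcal{L}(x)} - 1 < \tfrac{1}{|Y(x)|}$ upon exponentiating. Applying \Cref{lem:mass_bound} with the actual loss value then yields $C \le (e^{\mathcal{L}(x)} - 1)\,A < \tfrac{A}{|Y(x)|}$, so the negative mass is strictly dominated by a $1/|Y(x)|$ fraction of the positive mass.

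Next I would take logarithms of the inequality $C < A/|Y(x)|$ to obtain $\log C < \log A - \log|Y(x)|$. This is precisely the quantity that \Cref{lem:extreme_bounds} relates to the extreme similarities: the right-hand side lower-bounds $\min_{y \in Y(x)} s_y$, while $\log C$ upper-bounds $\max_{u \in U(x)} s_u$. Stringing these together gives
\[
\max_{u \in U(x)} s_u \;\le\; \log C \;<\; \log A - \log|Y(x)| \;\le\; \min_{y \in Y(x)} s_y,
\]
which is exactly the separation condition of \Cref{eq:kml_separation}.

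Since the argument is a short deduction once the lemmas are available, I expect no substantial obstacle; the only point requiring care is tracking strict versus non-strict inequalities. The strictness in the hypothesis $\mathcal{L}(x) < \varepsilon$ is what propagates through the exponential step to produce the strict inequality $C < A/|Y(x)|$, and hence strict separation—without it one would recover only $\ge$ in the conclusion. I would also remark that the threshold $\log\!\left(1 + \tfrac{1}{|Y(x)|}\right)$ is largest, and thus the condition most permissive, when $|Y(x)|$ is small: separating a single valid tail entity from the negatives requires only driving the loss below $\log 2$, whereas a larger positive set spreads the mass $A$ over more entities, lowering the guaranteed $\min_y s_y$ and demanding a correspondingly smaller loss.
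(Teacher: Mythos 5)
Your proposal is correct and follows essentially the same route as the paper's proof: both chain \Cref{lem:mass_bound} and \Cref{lem:extreme_bounds}, the paper by bounding $\min_{y \in Y(x)} s_y - \max_{u \in U(x)} s_u \ge -\log|Y(x)| - \log\bigl(e^{\varepsilon}-1\bigr)$ and asking when this is positive, you by chaining $\max_{u \in U(x)} s_u \le \log C < \log A - \log|Y(x)| \le \min_{y \in Y(x)} s_y$ directly from $C < A/|Y(x)|$. Your explicit bookkeeping of strict versus non-strict inequalities is a minor but welcome refinement over the paper's presentation; otherwise the two arguments are identical in substance.
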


\begin{proof}
From Lemmas~\ref{lem:mass_bound} and~\ref{lem:extreme_bounds},
\[
\min_{y \in Y(x)} s_y - \max_{u \in U(x)} s_u
\;\ge\;
- \log |Y(x)| - \log(e^{\varepsilon}-1),
\]
where $\varepsilon = \mathcal{L}(x)$.
The right-hand side is positive if and only if
\[
|Y(x)|\,(e^{\varepsilon}-1) < 1,
\]
which is equivalent to
\[
\varepsilon < \log\!\left(1 + \frac{1}{|Y(x)|}\right).
\]
Under this condition,
\[
\min_{y \in Y(x)} s_y > \max_{u \in U(x)} s_u,
\]
and therefore the KML separation condition holds.
\end{proof}

\paragraph{Interpretation.}
Theorem~\ref{thm:kml_separation} establishes a concrete link between the value of the KML
contrastive loss and the correctness of relational reasoning.
When the loss falls below the bound in \Cref{eq:kml_loss_bound}, the learned knowledge module
is guaranteed to separate valid tail entities from invalid ones under cosine similarity.
This provides an explicit criterion indicating when KML learning is expected to succeed
and when separation may fail, offering theoretical insight into the behavior of the model
during training.

% ============================================================
% Deterministic Composition Error Bound (KML; option (ii))
% Unit-normalized embeddings; Lipschitz on learned modules
% ============================================================

\subsection{Deterministic Composition Error Bound over Traversals}
\label{sec:det_comp_bound}

In KML, a reasoning program (or traversal) is a sequence of relation modules.
We provide a deterministic error bound on the deviation between the output of the learned
Knowledge Modules (KMs) and the output of an ideal (ground-truth) relational operator sequence. The bound explicitly shows how errors accumulate across hops and how Lipschitz continuity controls their propagation. A length-$T$ program is a sequence of relations
\[
P = (r_1, r_2, \ldots, r_T).
\]
For each relation $r_t$, KML learns a module $\phi_{r_t}:\mathbb{R}^d\rightarrow\mathbb{R}^d$ (a small MLP in our implementation).
To analyse approximation, we introduce an \emph{ideal} operator $F_{r_t}:\mathbb{R}^d\rightarrow\mathbb{R}^d$
that represents the desired mapping induced by the KG relation semantics (e.g., mapping a head-entity representation to a canonical representation of the valid tail-set for that relation).

Given an input state $z_0$ (the grounded entity representation), the ideal and learned executions are
\begin{align}
\label{eq:ideal_exec}
z_t &= F_{r_t}(z_{t-1}), \qquad t=1,\ldots,T,\\
\label{eq:learned_exec}
\hat z_t &= \phi_{r_t}(\hat z_{t-1}), \qquad t=1,\ldots,T,
\end{align}
where $z_t$ is the ideal state after $t$ hops and $\hat z_t$ is the KML-computed state.
We define the per-hop (deterministic) deviation using L2 loss as follows:
\begin{equation}
\label{eq:delta_def}
\delta_t := \|\hat z_t - z_t\|_2.
\end{equation}

We quantify how well each learned module $\phi_{r_t}$ approximates its ideal operator $F_{r_t}$ on the unit sphere:
\begin{equation}
\label{eq:eps_def}
\varepsilon_t := \sup_{\|u\|_2=1} \|\phi_{r_t}(u) - F_{r_t}(u)\|_2.
\end{equation}
Intuitively, $\varepsilon_t$ is the \emph{worst-case} (deterministic) approximation error of module $r_t$
over all unit-norm inputs.

We know each \emph{learned} module is Lipschitz continuos on the unit sphere by design:
\begin{equation}
\label{eq:lipschitz_phi}
\|\phi_{r_t}(u) - \phi_{r_t}(v)\|_2 \le L_t \|u - v\|_2,
\quad \forall \|u\|_2=\|v\|_2=1,
\end{equation}
for some constant $L_t\ge 0$ that may depend on the relation $r_t$.
This is a standard and verifiable property for MLPs: if each layer has bounded spectral norm and the activation
is Lipschitz (e.g., $\tanh$ is 1-Lipschitz), then the whole network is Lipschitz with constant bounded by
the product of layer norms.

\bigskip
\begin{lemma}[One-step error recursion under Lipschitz learned module).]
\emph{
For each hop $t$,
\begin{equation}
\label{eq:one_step_recursion}
\delta_t \le L_t\,\delta_{t-1} + \varepsilon_t.
\end{equation}
}
\end{lemma}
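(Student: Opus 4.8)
The plan is to prove the one-step recursion by a single application of the triangle inequality, splitting the total deviation into a term controlled by the Lipschitz continuity of the learned module and a term controlled by its worst-case approximation error. First I would rewrite the quantity of interest using the definitions in \Cref{eq:ideal_exec} and \Cref{eq:learned_exec}, so that $\delta_t = \|\phi_{r_t}(\hat z_{t-1}) - F_{r_t}(z_{t-1})\|_2$. The key algebraic move is to insert and subtract the auxiliary term $\phi_{r_t}(z_{t-1})$, i.e.\ the \emph{learned} module applied to the \emph{ideal} previous state. This lets me decompose the deviation as
\[
\delta_t \le \|\phi_{r_t}(\hat z_{t-1}) - \phi_{r_t}(z_{t-1})\|_2 + \|\phi_{r_t}(z_{t-1}) - F_{r_t}(z_{t-1})\|_2,
\]
where the first term measures how the learned module \emph{propagates} the incoming error $\delta_{t-1}$, and the second measures how much the learned module \emph{disagrees} with the ideal operator at the ideal state.

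Next I would bound each term separately. For the first term I invoke the Lipschitz property \Cref{eq:lipschitz_phi}, which gives $\|\phi_{r_t}(\hat z_{t-1}) - \phi_{r_t}(z_{t-1})\|_2 \le L_t \|\hat z_{t-1} - z_{t-1}\|_2 = L_t\,\delta_{t-1}$ by the definition \Cref{eq:delta_def}. For the second term I apply the definition of the worst-case approximation error \Cref{eq:eps_def}, which bounds $\|\phi_{r_t}(z_{t-1}) - F_{r_t}(z_{t-1})\|_2 \le \varepsilon_t$. Summing the two bounds yields $\delta_t \le L_t\,\delta_{t-1} + \varepsilon_t$, which is exactly \Cref{eq:one_step_recursion}.

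The main obstacle — and the point where the argument needs care — is that both \Cref{eq:lipschitz_phi} and \Cref{eq:eps_def} are stated only for unit-norm inputs ($\|u\|_2 = 1$), so these bounds are legitimate only if both the ideal state $z_{t-1}$ and the learned state $\hat z_{t-1}$ lie on the unit sphere. I would therefore make explicit the standing assumption that every intermediate representation is $\ell_2$-normalized after each hop, consistent with the normalization $\hat z = z/\|z\|_2$ used throughout the model and with the fact that all entity embeddings are $\ell_2$-normalized. Under this assumption both the Lipschitz step and the approximation-error step are justified on the unit sphere, and the recursion follows immediately. If intermediate states were not normalized, one would instead need Lipschitz and approximation bounds valid on a larger domain (e.g.\ a ball whose radius depends on the accumulated norms), which would only alter the constants and leave the structure of the recursion unchanged.
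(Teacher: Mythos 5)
Your proof is correct, and it takes a genuinely different --- and in fact cleaner --- route than the paper's. Both arguments insert an auxiliary midpoint and use only the Lipschitz continuity of the learned module $\phi_{r_t}$ (never of $F_{r_t}$), but you insert $\phi_{r_t}(z_{t-1})$, the \emph{learned} module at the \emph{ideal} state, whereas the paper inserts $F_{r_t}(\hat z_{t-1})$, the \emph{ideal} operator at the \emph{learned} state. The paper's choice leaves it with the mismatch term $F_{r_t}(\hat z_{t-1}) - F_{r_t}(z_{t-1})$, which it cannot bound directly since no Lipschitz assumption is made on $F_{r_t}$; it therefore re-expresses that difference through $\phi_{r_t}$ in three pieces (see \eqref{eq:Ft_decomposition}), paying the approximation error $\varepsilon_t$ two additional times and arriving at $\delta_t \le L_t\,\delta_{t-1} + 3\varepsilon_t$ --- the stated recursion is then recovered only by absorbing the factor $3$ into a redefined $\tilde\varepsilon_t := 3\varepsilon_t$, as the paper's remark on constants concedes. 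Your decomposition avoids this detour entirely: the Lipschitz property \eqref{eq:lipschitz_phi} handles $\|\phi_{r_t}(\hat z_{t-1}) - \phi_{r_t}(z_{t-1})\|_2 \le L_t\,\delta_{t-1}$, and a single application of \eqref{eq:eps_def} at the unit-norm point $z_{t-1}$ contributes $\varepsilon_t$, so you obtain \eqref{eq:one_step_recursion} with constant exactly $1$ --- a strictly tighter inequality under the same hypotheses. Your explicit flagging of the unit-norm requirement on both $z_{t-1}$ and $\hat z_{t-1}$ (both are needed for the Lipschitz step, since \eqref{eq:lipschitz_phi} is stated only on the sphere) matches, and is actually more careful than, the paper's implicit ``unit-normalized states'' assumption in the composition theorem. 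In short: same ingredients, better choice of midpoint, sharper constant; nothing downstream in the $T$-hop bound is lost by substituting your version, and the cosmetic rescaling of $\varepsilon_t$ becomes unnecessary.
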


\paragraph{Proof.}
We start from the definition of $\delta_t$ and substitute the executions \eqref{eq:ideal_exec}--\eqref{eq:learned_exec}:
\begin{align*}
\delta_t
&= \|\hat z_t - z_t\|_2 \\
&= \|\phi_{r_t}(\hat z_{t-1}) - F_{r_t}(z_{t-1})\|_2.
\end{align*}
We now add and subtract $F_{r_t}(\hat z_{t-1})$ inside the norm:
\[
\phi_{r_t}(\hat z_{t-1}) - F_{r_t}(z_{t-1})
=
\underbrace{\big(\phi_{r_t}(\hat z_{t-1}) - F_{r_t}(\hat z_{t-1})\big)}_{\text{(A) module approximation at }\hat z_{t-1}} +\]
\[
\underbrace{\big(F_{r_t}(\hat z_{t-1}) - F_{r_t}(z_{t-1})\big)}_{\text{(B) mismatch due to different inputs}}.
\]
Applying the triangle inequality gives
\begin{align*}
\delta_t
&\le
\|\phi_{r_t}(\hat z_{t-1}) - F_{r_t}(\hat z_{t-1})\|_2
+
\|F_{r_t}(\hat z_{t-1}) - F_{r_t}(z_{t-1})\|_2.
\end{align*}
At this point, we use the fact that we are analyzing KML under \emph{learned-module Lipschitzness}.
To bound term (B) without requiring Lipschitzness of $F_{r_t}$, we express the mismatch through the learned module:
%\[
% F_{r_t}(\hat z_{t-1}) - F_{r_t}(z_{t-1})
% =
% \big(F_{r_t}(\hat z_{t-1})-\phi_{r_t}(\hat z_{t-1})\big)
% +
% \big(\phi_{r_t}(\hat z_{t-1})-\phi_{r_t}(z_{t-1})\big)
% +
% \big(\phi_{r_t}(z_{t-1})-F_{r_t}(z_{t-1})\big).
% \]
\begin{equation}
\label{eq:Ft_decomposition}
\begin{aligned}
F_{r_t}(\hat z_{t-1}) - F_{r_t}(z_{t-1})
\;=\;&
\big(F_{r_t}(\hat z_{t-1})-\phi_{r_t}(\hat z_{t-1})\big) \\
&+
\big(\phi_{r_t}(\hat z_{t-1})-\phi_{r_t}(z_{t-1})\big) \\
&+
\big(\phi_{r_t}(z_{t-1})-F_{r_t}(z_{t-1})\big).
\end{aligned}
\end{equation}

Taking norms and applying triangle inequality again yields
\begin{align*}
\|F_{r_t}(\hat z_{t-1}) - F_{r_t}(z_{t-1})\|_2
&\le
\|F_{r_t}(\hat z_{t-1})-\phi_{r_t}(\hat z_{t-1})\|_2 \\
&\quad+
\|\phi_{r_t}(\hat z_{t-1})-\phi_{r_t}(z_{t-1})\|_2 \\
&\quad+
\|\phi_{r_t}(z_{t-1})-F_{r_t}(z_{t-1})\|_2.
\end{align*}
Now, by the definition of $\varepsilon_t$ in \eqref{eq:eps_def}, both approximation terms are bounded:
\[
\|F_{r_t}(\hat z_{t-1})-\phi_{r_t}(\hat z_{t-1})\|_2 \le \varepsilon_t,
\qquad
\|\phi_{r_t}(z_{t-1})-F_{r_t}(z_{t-1})\|_2 \le \varepsilon_t,
\]
because $\|\hat z_{t-1}\|_2=\|z_{t-1}\|_2=1$ by assumption.
For the middle term, we apply Lipschitzness of the learned module \eqref{eq:lipschitz_phi}:
\[
\|\phi_{r_t}(\hat z_{t-1})-\phi_{r_t}(z_{t-1})\|_2 \le L_t\|\hat z_{t-1}-z_{t-1}\|_2 = L_t\,\delta_{t-1}.
\]
Combining these bounds gives
\[
\|F_{r_t}(\hat z_{t-1}) - F_{r_t}(z_{t-1})\|_2 \le 2\varepsilon_t + L_t\,\delta_{t-1}.
\]
Returning to the earlier inequality for $\delta_t$,
\[
\delta_t
\le
\underbrace{\|\phi_{r_t}(\hat z_{t-1}) - F_{r_t}(\hat z_{t-1})\|_2}_{\le \varepsilon_t}
+
\underbrace{\|F_{r_t}(\hat z_{t-1}) - F_{r_t}(z_{t-1})\|_2}_{\le 2\varepsilon_t + L_t\delta_{t-1}},
\]
we obtain
\[
\delta_t \le \varepsilon_t + (2\varepsilon_t + L_t\delta_{t-1}) = L_t\delta_{t-1} + 3\varepsilon_t.
\]
Finally, we absorb the constant factor into the definition of $\varepsilon_t$ (or define $\tilde\varepsilon_t:=3\varepsilon_t$),
which yields the clean recursion \eqref{eq:one_step_recursion}. \hfill $\blacksquare$

\paragraph{Remark (about constants).}
The derivation above shows that, under only the Lipschitzness of $\phi_{r_t}$, the one-step recursion holds up to a constant factor
multiplying $\varepsilon_t$; this factor depends on how the ``ideal'' operator $F_{r_t}$ is defined.
For clarity in the main theorem, we use the compact form \eqref{eq:one_step_recursion} with $\varepsilon_t$ interpreted as a uniform bound
on the \emph{effective} one-step approximation error (which may include a small constant factor).

\bigskip
\begin{theorem}[(Deterministic composition error bound over a $T$-hop traversal).]
\emph{
Assume unit-normalized states and \eqref{eq:lipschitz_phi} for the learned modules.
Then the final-state error after executing a length-$T$ program satisfies
\begin{equation}
\label{eq:main_comp_bound}
\delta_T
\le
\left(\prod_{i=1}^{T} L_i\right)\delta_0
+
\sum_{t=1}^{T}
\left(\prod_{i=t+1}^{T} L_i\right)\varepsilon_t,
\end{equation}
where by convention $\prod_{i=T+1}^{T} L_i := 1$.
In particular, if the initial state is exact ($\delta_0=0$), then
\begin{equation}
\label{eq:main_comp_bound_no_init}
\delta_T
\le
\sum_{t=1}^{T}
\left(\prod_{i=t+1}^{T} L_i\right)\varepsilon_t.
\end{equation}
}
\end{theorem}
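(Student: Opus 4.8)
The plan is to prove the closed-form bound by unrolling the one-step recursion $\delta_t \le L_t \delta_{t-1} + \varepsilon_t$ established in the preceding lemma. There is no additional analytic content beyond that per-hop inequality, so the theorem reduces to careful bookkeeping of the product-of-Lipschitz-constant factors. I would carry this out by finite induction on the hop index $T$ (equivalently, by direct telescoping of the recursion), and I expect the entire argument to be algebraic.

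First I would dispatch the base case. For $T=0$ the claimed bound reads $\delta_0 \le \delta_0$, which holds with equality once we adopt the stated empty-product convention $\prod_{i=1}^{0} L_i := 1$ and read the empty sum as zero. For the inductive step, I assume the bound at level $T-1$ and apply the one-step recursion \eqref{eq:one_step_recursion} at hop $T$, namely $\delta_T \le L_T \delta_{T-1} + \varepsilon_T$. Substituting the inductive hypothesis for $\delta_{T-1}$ gives
\begin{equation*}
\delta_T \le L_T\!\left[\Big(\prod_{i=1}^{T-1} L_i\Big)\delta_0 + \sum_{t=1}^{T-1}\Big(\prod_{i=t+1}^{T-1} L_i\Big)\varepsilon_t\right] + \varepsilon_T.
\end{equation*}

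Distributing $L_T$ absorbs it into each product: the leading term becomes $\big(\prod_{i=1}^{T} L_i\big)\delta_0$, and inside the sum each factor $L_T \prod_{i=t+1}^{T-1} L_i$ collapses to $\prod_{i=t+1}^{T} L_i$. The remaining term $\varepsilon_T$ is exactly the $t=T$ summand, since $\prod_{i=T+1}^{T} L_i = 1$ by the theorem's convention. Re-indexing the sum to run from $t=1$ to $T$ then reproduces \eqref{eq:main_comp_bound}, closing the induction, and the special case \eqref{eq:main_comp_bound_no_init} follows immediately by setting $\delta_0 = 0$.

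The only real obstacle here is notational rather than mathematical: I must track the product index ranges consistently and invoke the empty-product boundary convention at precisely the moment the freshly generated $\varepsilon_T$ term is merged into the summation as its final $t=T$ contribution. Once that is handled, no further estimates are required; the result is just the standard closed form for an inhomogeneous linear recursion (a discrete Gr\"onwall-type inequality), in which the Lipschitz constants $L_i$ play the role of the per-step growth factors that govern how approximation error is amplified as it propagates along the $T$-hop traversal.
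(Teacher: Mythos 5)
Your proposal is correct and follows essentially the same route as the paper: both arguments rest entirely on unrolling the one-step recursion $\delta_t \le L_t\,\delta_{t-1} + \varepsilon_t$ from the preceding lemma, with the paper expanding the first few hops explicitly and asserting the pattern, while you formalize that same telescoping as a finite induction. Your inductive packaging is, if anything, slightly tidier than the paper's ``Step 1/2/3 then Pattern'' presentation, but the mathematical content is identical.
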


\paragraph{Proof.}
From Lemma~1 we have the recursion
\[
\delta_t \le L_t\delta_{t-1} + \varepsilon_t.
\]
We expand this inequality iteratively.

\textbf{Step 1 (expand $\delta_1$).}
\[
\delta_1 \le L_1\delta_0 + \varepsilon_1.
\]

\textbf{Step 2 (expand $\delta_2$ using the bound on $\delta_1$).}
\begin{align*}
\delta_2
&\le L_2\delta_1 + \varepsilon_2\\
&\le L_2(L_1\delta_0 + \varepsilon_1) + \varepsilon_2\\
&= (L_2L_1)\delta_0 + (L_2)\varepsilon_1 + \varepsilon_2.
\end{align*}

\textbf{Step 3 (expand $\delta_3$).}
\begin{align*}
\delta_3
&\le L_3\delta_2 + \varepsilon_3\\
&\le L_3\big((L_2L_1)\delta_0 + (L_2)\varepsilon_1 + \varepsilon_2\big) + \varepsilon_3\\
&= (L_3L_2L_1)\delta_0 + (L_3L_2)\varepsilon_1 + (L_3)\varepsilon_2 + \varepsilon_3.
\end{align*}

\textbf{Pattern.}
After $T$ steps, each earlier error $\varepsilon_t$ is multiplied by the product of Lipschitz constants of all
\emph{subsequent} modules, because it is propagated through the remaining hops.
Thus,
\[
\delta_T
\le
\left(\prod_{i=1}^{T} L_i\right)\delta_0
+
\sum_{t=1}^{T}
\left(\prod_{i=t+1}^{T} L_i\right)\varepsilon_t,
\]
which is exactly \eqref{eq:main_comp_bound}. If $\delta_0=0$, we obtain \eqref{eq:main_comp_bound_no_init}. \hfill $\blacksquare$

\bigskip
\noindent\textbf{Corollary 1 (Simplified bounds for uniform Lipschitzness and uniform approximation error).}
\emph{
Assume $L_t \le L$ and $\varepsilon_t \le \varepsilon$ for all $t=1,\ldots,T$, and $\delta_0=0$.
Then:
\begin{itemize}
\item If $L=1$, the error grows at most linearly with hop length:
\begin{equation}
\label{eq:cor_L1}
\delta_T \le T\,\varepsilon.
\end{equation}
\item If $L\neq 1$, the error is bounded by a geometric series:
\begin{equation}
\label{eq:cor_Lneq1}
\delta_T
\le
\varepsilon\sum_{k=0}^{T-1}L^{k}
=
\varepsilon\frac{L^{T}-1}{L-1}.
\end{equation}
\item In the contractive case $L<1$, the error is uniformly bounded as $T\to\infty$:
\begin{equation}
\label{eq:cor_contract}
\delta_T \le \frac{\varepsilon}{1-L}.
\end{equation}
\end{itemize}
}

\paragraph{Proof.}
Starting from \eqref{eq:main_comp_bound_no_init} and using $L_i\le L$ and $\varepsilon_t\le\varepsilon$,
\[
\delta_T
\le
\sum_{t=1}^{T}
\left(\prod_{i=t+1}^{T} L_i\right)\varepsilon_t
\le
\varepsilon\sum_{t=1}^{T} L^{T-t}
=
\varepsilon\sum_{k=0}^{T-1}L^k.
\]
Evaluating the sum gives \eqref{eq:cor_L1} when $L=1$ and \eqref{eq:cor_Lneq1} when $L\neq 1$.
If $L<1$, then $\sum_{k=0}^{T-1}L^k \le \sum_{k=0}^{\infty}L^k = 1/(1-L)$, yielding \eqref{eq:cor_contract}. \hfill $\blacksquare$

\paragraph{Interpretation for KML.}
Theorem~2 shows that deterministic errors accumulate along a traversal in a structured way:
(i) each hop contributes its own approximation error $\varepsilon_t$,
(ii) errors injected early in the program are amplified by subsequent module Lipschitz constants,
and (iii) if the learned modules are near-non-expansive (e.g., $L_t\approx 1$ due to bounded activations such as $\tanh$
and regularized weights), then the overall error grows slowly with traversal length. This provides
a deterministic stability guarantee for KML execution over multi-hop programs.

\section{Experiments}
\label{sec:exp}

\subsection{Experimental resources}
We conduct experiments on the \dataset benchmark to evaluate the effectiveness of VLMs and Neurosymbolic methods for procedural knowledge-based question answering. 
We use NVIDIA A100 GPUs (80GB VRAM) for conducting experiments with VLMs, NVIDIA GeForce RTX 2080 Ti and A5000 GPUs (24GB VRAM) for training KML, and RTX 3090 GPUs (24GB VRAM) for training knowledge graph embedding methods.

\subsection{KML training details}

We explore three variants of KML, differing in how the embedding function $x(;\theta_x)$ is implemented. In \textbf{\kmlfclip}, $x(;\theta_x)$ is a frozen CLIP text encoder. In \textbf{\kmlclip}, we initialize $x(;\theta_x)$ with CLIP embeddings and fine-tune its parameters. In \textbf{\kmlrand}, $\theta_x$ is learned from scratch (i.e., randomly initialized).
To train KML, we use batch size of 256 triplets with AdamW~\cite{loshchilov2017decoupled} optimizer (0.01 learning rate), and 100 training epochs. Programs are generated by GPT-4o by default.
For \skshot, KML modules are fine-tuned for 100 additional epochs on video-QA pairs (0.001 learning rate). All \kmlfclip models employ 2-layer networks (128 hidden dimensions) with  CLIP~\cite{radford2021learning} (ViT-B/32) text embeddings.

\subsection{VLM Evaluation}
We evaluate VLMs under five different settings to assess their procedural knowledge reasoning ability: (1) \textbf{\svlmzero}: VLMs take a video segment (8 uniformly sampled frames), a question, and options as input; (2) \textbf{\svlmzeroplus}: VLMs take a video segment (8 uniformly sampled frames), a question, options, and the top-5 step/task categories from P.VRL as input; (3) \textbf{\skgtrain}: VLMs are tuned using the triplet instances of our \pkg as questions using LoRA~\cite{hu2021lora} and then evaluate the fine-tuned VLMs with \dataset; (4) \textbf{\skshot}: VLMs are tuned using LoRA for 100 epochs with 4 randomly sampled frames as input, demonstrating the impact of few-shot fine-tuning. All models are trained using the training set of 1,700 question-answer pairs; (5) \textbf{\skgkshot}: VLMs are tuned using triplet instances following \skgtrain, and 1,700 question-answer pairs as described in \skshot.

\subsection{Neurosymbolic Methods} 

We compare our \skml against the following NS methods.

\noindent
\textbf{Inference by Graph Propagation (\INGP)}:We implement a simple NS baseline that uses a given program and a set of grounded entities with associated probabilities or logits to answer questions. Given a directed knowledge graph with binary relations, we propagate uncertainty from grounded entities through the relations specified in the program. Using breadth-first traversal, we accumulate scores at each target entity by summing the propagated logits. This approach resembles probabilistic logic-based inference.

\noindent
\textbf{KG Embedding Methods:}
We compare against standard KG embedding models, including \transe~\cite{bordes2013translating}, \transh~\cite{wang2014knowledge}, and \rotate~\cite{sun2019rotate}. After training, we use LLM-generated programs (as in KML) to perform multi-hop reasoning. These models are selected for their support of compositional reasoning.
Embedding dimensions are tuned on a validation set, yielding optimal sizes of 256 for \transe, 100 for \transh, and 256 for \rotate. We also include variants with CLIP-initialized entity and relation embeddings to enable direct comparison with CLIP-based KML models.

\noindent
\textbf{Modern NS methods}: We compare with modern NS methods including ViperGPT~\cite{suris2023vipergpt} that uses the power of LLM and vision models, and MAC~\cite{hudson2018compositional}. We evaluate MAC on a classification-based VQA task, using a single image frame and a question (without answer choices) as input. The model predicts from 2,079 answer classes aggregated from the dataset. We use GloVe~\cite{pennington2014glove} embeddings for text and extract visual features with pretrained ResNet101~\cite{he2016deep}. We tuned the embedding size, MAC hidden size, and the number of MAC layers, selecting the best setup based on validation performance.

\subsection{Metrics} 
Since VLM-generated text may not exactly match the predefined multiple-choice options, we adopt the filtering and MCQ accuracy computation strategy from~\cite{yue2023mmmu,lin2023vila}. We report both overall accuracy and mean accuracy for each model. \textbf{Accuracy} is computed as the average score across all test samples, while \textbf{mean accuracy} (mAcc)s is the average of per-template accuracies, providing equal weight to each traversal template.

% \noindent
\subsection{Analysis and Discussion}
\begin{table}[h!]
\centering    
% \begin{NiceTabular}{l|l|c|c}[colortbl-like]
\begin{tabular}{l|l|c|c}
\toprule

Setting & \textbf{Model} & \textbf{Acc} &  \textbf{mAcc}  \\ \midrule        
%\sbqa & GPT-4o & 22.3 &  	21.2  \\
%\midrule         

\multirow{5}{*}{\svlmzero}  
&DeepSeek-VL2 (27.4B) \cite{wu2024deepseek}& 58.4 & 55.4 \\ 
&MiniCPM-V (8B) \cite{yao2024minicpm}& 62.6 & 59.7  \\ 
&mPLUG-Owl3 (7B) \cite{ye2024mplug}& \un{63.1} & \un{60.2} \\ 
&Qwen2.5-VL (7B) \cite{bai2025qwen2}& 59.6 & 57.8 \\ 
%&Tarsier (7B) \cite{wang2024tarsierrecipestrainingevaluating}& 60.5 & 58.0  \\ 
&VideoChat2-HD (7B) \cite{li2023videochat}& 61.2 & 58.4  \\ 
%&VILA1.5 (8B) \cite{lin2023vila}& 60.8 & 58.5 \\ 

\midrule

\multirow{5}{*}{VLM + P.VRL } 
&DeepSeek-VL2 (27.4B) \cite{wu2024deepseek}& 64.5 & 	59.9  \\ 
%&Tarsier (7B) \cite{wang2024tarsierrecipestrainingevaluating}& 65.2 & 	61.0\\ 
&MiniCPM-V (8B) \cite{yao2024minicpm}& 67.4 & 	63.8 \\ 
&mPLUG-Owl3 (7B) \cite{ye2024mplug}& 65.5 & 	61.6  \\ 
%& VILA1.5 (8B) \cite{lin2023vila}& 67.6 & 	64.4  \\
&Qwen2.5-VL (7B) \cite{bai2025qwen2}& \un{69.4} & 	\un{65.8} \\ 
&VideoChat2-HD (7B) \cite{li2023videochat}& 65.5 & 	59.9 \\ 

\midrule

%%%% New division
%& VILA1.5 (8B)~\cite{lin2023vila} & 55.8 &  	53.2  \\ 
\multirow{3}{*}{\skgtrain} 
& MiniCPM-V (8B)~\cite{yao2024minicpm} & 63.5 &  	61.0  \\ 
& mPLUG-Owl3 (7B) \cite{ye2024mplug} & 64.8  &  	61.4 \\
& Qwen2.5-VL (7B) \cite{bai2025qwen2} & \un{67.3} &  \un{64.1}	 	\\
\midrule
%& VILA1.5 (8B)~\cite{lin2023vila} & 64.9  &  	63.3   \\ 
\multirow{3}{*}{\skshot} & MiniCPM-V (8B)~\cite{yao2024minicpm} & 71.1  &  	71.4   \\ 
& mPLUG-Owl3 (7B) \cite{ye2024mplug} &  71.8 &  72.4	\\
& Qwen2.5-VL (7B) \cite{bai2025qwen2} &                 \un{73.6} & \un{73.4} \\

\midrule
%& VILA1.5 (8B)~\cite{lin2023vila} & 67.0  &  	65.7   \\ 
\multirow{3}{*}{\skgkshot} & MiniCPM-V(8B)~\cite{yao2024minicpm} & 72.1 &  	72.1 \\ 
 & mPLUG-Owl3 (7B) \cite{ye2024mplug} &  73.1 &  \textbf{73.8}	\\
 & Qwen2.5-VL (7B) \cite{bai2025qwen2} &        
\textbf{74.2} & \textbf{73.8} \\

% \midrule \multicolumn{2}{|l|}{Human } & 92.4 & 92.4\\
\bottomrule           
% \end{NiceTabular}
\end{tabular}

\caption{
% Comparing the performance of VLM methods under different settings. The underlined scores indicate the best performance in each setting, the bold scores indicate the best performance among all the VLMs in all settings. 
% Performance comparison of VLM models across different settings. Underlined scores denote the best-performing method within each setting, while bold scores highlight the best overall.
Comparison of VLMs in different settings. Underlined scores denote the best-performing method within each setting, while bold scores highlight the best overall.
}
\label{tab:main.results}
\end{table}

\begin{table}[t]
\centering
% \scriptsize
% \resizebox{\linewidth}{!}{
% \begin{tabular}{>{\raggedright\arraybackslash}p{3cm}|l|c|c}
\begin{tabular}{
  >{\raggedright\arraybackslash}p{2.5cm} |
  >{\raggedright\arraybackslash}p{3cm} |
  c | c 
}
\toprule
\textbf{Setting} & \textbf{Model} & \textbf{Acc} &  \textbf{mAcc}  \\         \midrule         
\multirow{2}{*}{\shortstack[l]{No Training \\ / No Program}} & ViperGPT & 41.6 &  	40.9 \\ 
% \multirow{2}{*} {\parbox{1.2cm}}{\raggedright No Training / No Program} & ViperGPT & 41.6 &  	40.9 \\ 
% \multirow{2}{*}{No training / No Program} & ViperGPT & 41.6 &  	40.9 \\ 

&GPT-4o + P.VRL & \un{71.2} &  	\un{69.0} \\
\midrule 
Program Only &  \INGP (+P.VRL)   &  62.8   & 60.0  \\
\midrule
\skshot Only  & MAC & 11.6 & 20.0\\ \midrule
% \multirow{4}{*}{No training}  &    \INGP (top1)  &  17.6   &  19.4 \\
%  & ViperGPT & 41.6 &  	40.9 \\ 
%   &  \INGP (top5)  &  71.1   & 63.3  \\
%   & GPT-4o + P.VRL & \un{71.2} &  	\un{69.0} \\
% \midrule
% \skshot  & MAC~\cite{hudson2018compositional} & 11.6 & 20.0\\ \midrule
% \multirow{9}{*}{\parbox{3cm}{\skgtrain }}
\multirow{9}{*}{\skgtrain}
& TransE & 63.6 & 51.6 \\ % TransE, dimension = 256
 & TransH & 73.1 & 66.3 \\ % dimension = 100
& RotatE & 41.6 & 29.2 \\ % dimension = 256
& TransE+CLIP & 56.8 & 45.4 \\ % TransE, dimension = 512
 & TransH+CLIP & 70.6 & 65.9 \\ % dimension = 512
& RotatE+CLIP & 48.8 & 35.2 \\ % dimension = 512  
 &  \kmlfclip (Ours) &  74.6  & \un{71.6} \\ 
 &  \kmlrand (Ours) &  73.5  & 70.0 \\ 
 &  \kmlclip (Ours) &  \un{75.3}  & 71.5 \\ \midrule
 
% \multirow{3}{*}{\parbox{3cm}{\skgkshot}}  & \kmlfclip (Ours)   &  76.7 &  75.3  \\
\multirow{3}{*}{\skgkshot}  & \kmlfclip (Ours)   &  76.7 &  75.3  \\
& \kmlrand (Ours)   &  77.4 &  76.3  \\
& \kmlclip (Ours)   &  \textbf{78.1} &  \textbf{77.1}  \\

 % \midrule

% \multirow{1}{*}{\parbox{3cm}{GT Task and Step}} 
% \rowcolor{gray}
% & GPT-4o$^*$ & 74.7 &  	72.6  \\
% \rowcolor{gray} & KML$^*$ & 81.1 &  79.5    \\ 
\bottomrule          
    
\end{tabular}
\caption{
Performance comparison of NS methods.}

\label{tab:main.results.ns}
\vspace{-1.5em}
\end{table}

\paragraph{Benchmarking VLMs on \dataset.}

\Cref{tab:main.results} compares the performance of various VLMs across five training and inference settings, revealing several key insights. \textbf{Providing predicted step and task} information from ProceduralVRL (VLM+P.VRL) consistently improves performance over the \textbf{zero-shot} setting. These gains highlight the importance of grounded procedural context in enhancing reasoning, even for strong models like MiniCPM-V and Qwen2.5-VL.
\textbf{Training on KG triplets} yields some improvement over zero-shot baselines, though the gains are modest and less consistent, suggesting that aligning symbolic representations with multimodal inputs remains non-trivial. 
\textbf{QA-based fine-tuning} leads to larger improvements, particularly for MiniCPM-V and Qwen2.5-VL. 
These gains indicate that VLMs are capable of adapting to task-specific supervision, even when provided with a relatively small number of QA pairs (1,700 samples).
The best performance is achieved when \textbf{combining both KG and QA training}, with Qwen2.5-VL reaching 74.2/73.8, indicating that integrating structured procedural knowledge with task-specific examples provides complementary benefits for enhancing procedural understanding in VLMs.

\paragraph{Benchmarking Neurosymbolic Methods.}
As shown in Table~\ref{tab:main.results.ns}, all KML variants outperform all baselines, demonstrating the benefit of executing LLM-generated KG traversal programs with relation-specific KM for the proposed procedural reasoning question answering task. 
KML achieves the highest performance under \skgkshot using \kmlclip variant.
The performance of \kmlrand is not far from \kmlfclip and interestingly, when tuned with \skgkshot the \kmlrand performs better than \kmlfclip under the same setting.
\textbf{ViperGPT} performs poorly on our benchmark due to the absence of built-in reasoning operators and reliance on predefined program templates that are not suited for the structured reasoning required in procedural knowledge tasks.
\textbf{\INGP} suffers from combinatorial explosion and poor grounding, often generating irrelevant results from uncertain starting nodes. In contrast, KML handles such uncertainty better by grounding traversal via learned knowledge modules.
\textbf{MAC}, originally designed for visual reasoning tasks (e.g., answering questions like “What is to the left of the green box?”), fails in our setting due to its lack of access to external procedural knowledge. 

Among \textbf{KG embedding methods}, TransE and TransH perform better due to their additive or projective composition, aligning well with our program-based reasoning. RotatE underperforms, likely because its complex-valued, rotation-based composition is less effective for multi-hop reasoning. CLIP-based initialization yields mixed results—improving RotatE but degrading TransE and TransH—indicating varying alignment with visual semantics across models.
We also evaluate \textbf{GPT-4o} using top-5 predicted step/task category names from P.VRL. It achieves 71.2\% accuracy and 69.0\% mean accuracy.

\subsection{KML's Generalizability.} 
To assess KML's generalizability, we added ten binary relations from the \textbf{STAR benchmark}~\cite{wu2024star}. 
We use CLIP~\cite{radford2021learning} model for entity recognition such as action, relation, object. Specifically, we fine-tune CLIP vision model on the provided annotations of Action Genome dataset and the STAR Benchmark~\cite{wu2024star} for visual entity recognition using contrastive learning.  
We design 10 knowledge modules from the hyper-graphs of the training set of the dataset.
\kmlfclip achieved 74.9\% on Interaction and 76.7\% on Feasibility questions, outperforming prior bests of 71.8\%~\cite{jaiswal2025learning} and 62.4\%~\cite{yu2023self}. For Sequence and Prediction questions, it scored 57.3\% and 49.8\%, respectively.

\textbf{Training with other KGs.}
We also trained \kmlfclip using a \textbf{GPT-4o-generated KG} from 7,687 WikiHow tasks (57,027 steps, 8.7M triplets) capturing tools, actions, objects, and purposes. Evaluated on our \dshort~ dataset, it achieved 73.9\% mean accuracy, rising to 74.9\% with \skgkshot, suggesting that while generic KGs help, domain-specific modules remain advantageous.

\subsection{KML’s Robustness.} \Cref{fig:topk} (left) shows KML’s performance using top-$k$ step/task predictions ($k=1$ to $5$). Using all top-5 predictions yields the best QA performance, while top-1 predictions still achieve strong results, demonstrating the model’s robustness to imperfect inputs.
\Cref{fig:topk} (right) reports a moderate correlation (0.24) between step prediction accuracy and QA accuracy, suggesting KML does not heavily depend on input quality, benefiting from embedding-based reasoning and generalizable knowledge modules.
\begin{figure}[h!]
    \centering
    \includegraphics[width=0.45\linewidth]{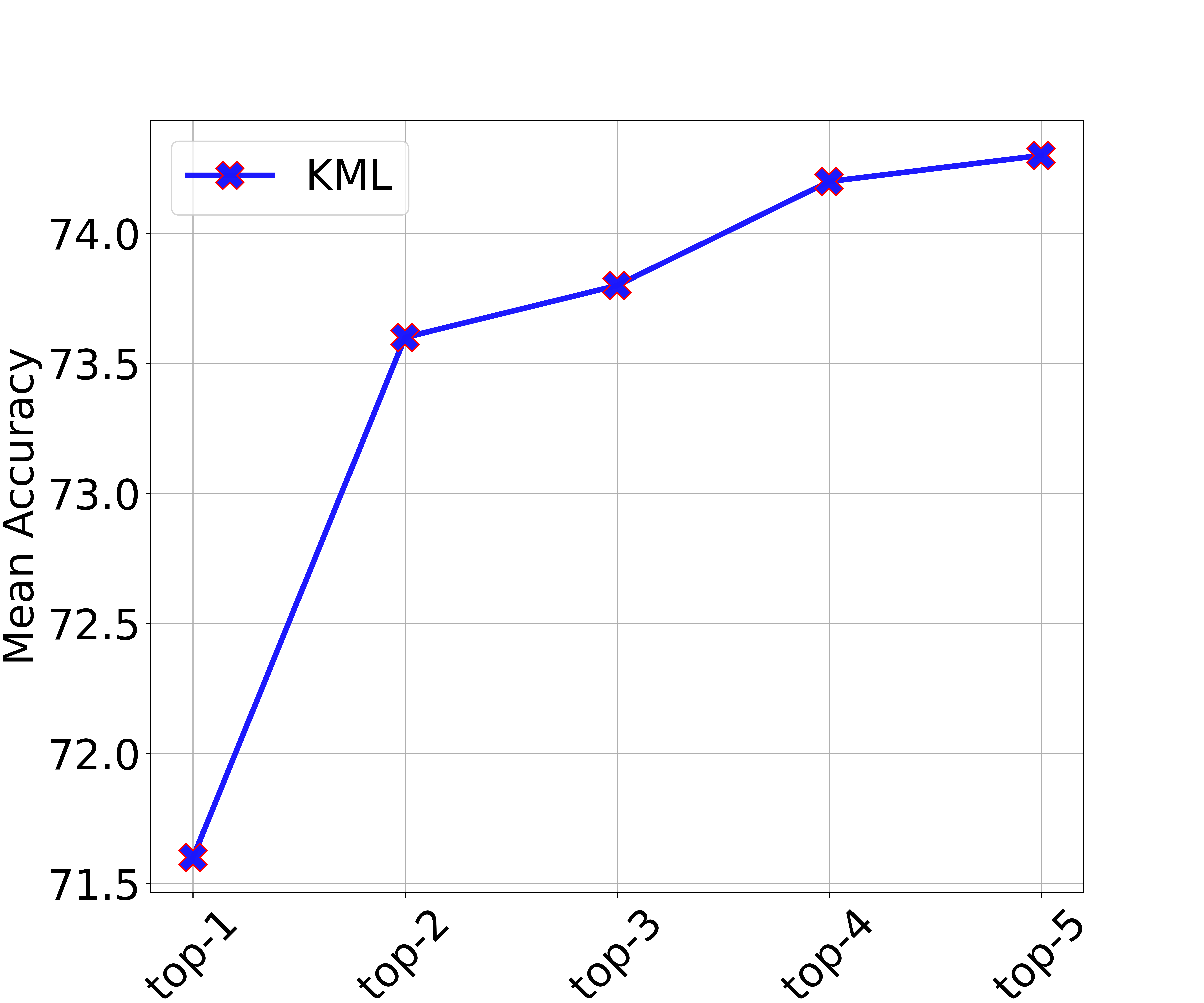}
    \includegraphics[width=0.5\linewidth]{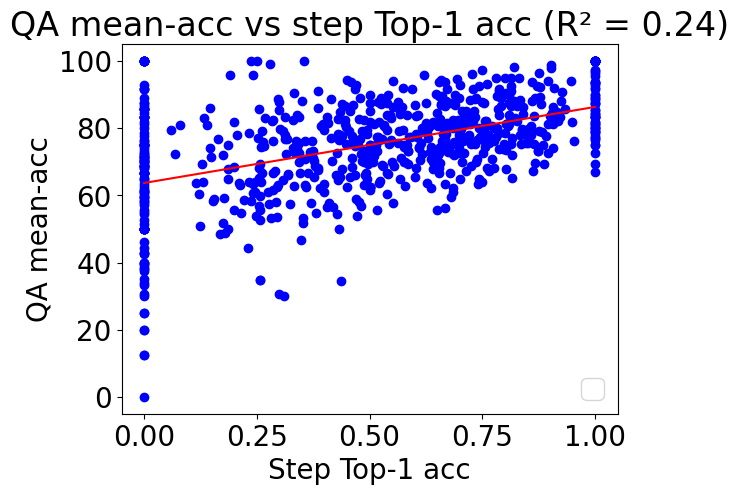}
    \caption{    
    (Left) QA performance of \kmlfclip using top-1 to top-5 grounded entities from P.VRL. (Right) Correlation between step prediction accuracy and QA accuracy.
    }
    \label{fig:topk}    
\end{figure}

\subsection{KML Ablation.} 
We further study the impact of various components of our procedural knowlde question answering model shown in~\Cref{fig:model}.
First, we ablated task/step grounding using VLM (P.VRL). Without grounding, performance dropped to 44.0\%/42.4\%.
Then we replace the LLM-generated program with programs generated exhaustively constraining the program length to maximum of 3-hops. Given a question, we select the program using the inference mechanism of KML.
Replacing LLM-generated programs with exhaustive programs yielded 
57.9\%/47.7\%. 
These results show that grounding, program generation, and knowledge modules each contribute substantially to the performance.
We train KML modules from scratch using only \skshot, with \kmlfclip achieving 59.3\% mean accuracy—highlighting the value of training on \pkg data.
KML allows exploration of multiple programs per question, though gains over single-program use are marginal. It also supports expert program editing for improved reliability. 
\begin{table*}[t]
\centering
% \small
\resizebox{\linewidth}{!}{%
\begin{tabular}{lccccccc}
\hline
\textbf{Method} & \textbf{Grounding (P.VRL)} & \textbf{LLM-Gen. Program} & \textbf{KM} &  \textbf{Use KG} &\textbf{Accuracy} & \textbf{Mean Accuracy} \\
\hline
Without grounding (use video feature)  &  & \checkmark & \checkmark & \checkmark & 44.0 & 42.4 \\
With Random Program (minimum energy)  & \checkmark  &  & \checkmark & \checkmark &57.9 & 47.7 \\
With Random Init. KML & \checkmark  & \checkmark & \checkmark &  &52.8 & 59.3\\
Without Knowledge Module (IGP)  & \checkmark & \checkmark & \checkmark & \checkmark &62.8 & 60.0 \\
KML-CLIP (Our full model)  & \checkmark & \checkmark & \checkmark & \checkmark &75.3 & 71.5 \\
\hline
\end{tabular}}
\caption{Ablation results showing the impact of KG training, grounding, LLM program generation, and knowledge modules.}
\label{tab:ablation}
\end{table*}

\subsection{Impact of the type of LLM for program generation} 
We evaluate different LLMs for program generation using \kmlfclip in~\Cref{tbl.llm}.
Most modern LLMs are good at generating the program given the question and extracting the grounded entity. We choose GPT-4o as it performs the best.
\begin{table}[t]
    \centering
    \begin{tabular}{c|c|c} \hline
        LLM & acc & m.acc  \\ \hline
        GPT-4o  & 74.6 & 71.6 \\
        LLaMA-3-8B  & 74.1 & 68.4 \\
        DeepSeek-V2.5  & 73.4 & 66.5 \\
        Mistral-7B  & 73.2 & 66.2 \\
        Qwen-2.5  & 72.6 & 63.7 \\ \hline        
    \end{tabular}
    \caption{Impact of LLMs for program generation.}
    \label{tbl.llm}
\end{table}

\subsection{Hyper-Parameter Learning}
\label{sec.hyper.param.kml}
We evaluate the impact of the hidden size of our knowledge modules. As shown in \Cref{fig:validation.hdim}, a small hidden size of 128 is sufficient to learn all modules in the KG, using a total number of 2,107,393 parameters. 
\begin{figure}[h!]
    \centering
    \includegraphics[width=0.49\linewidth]{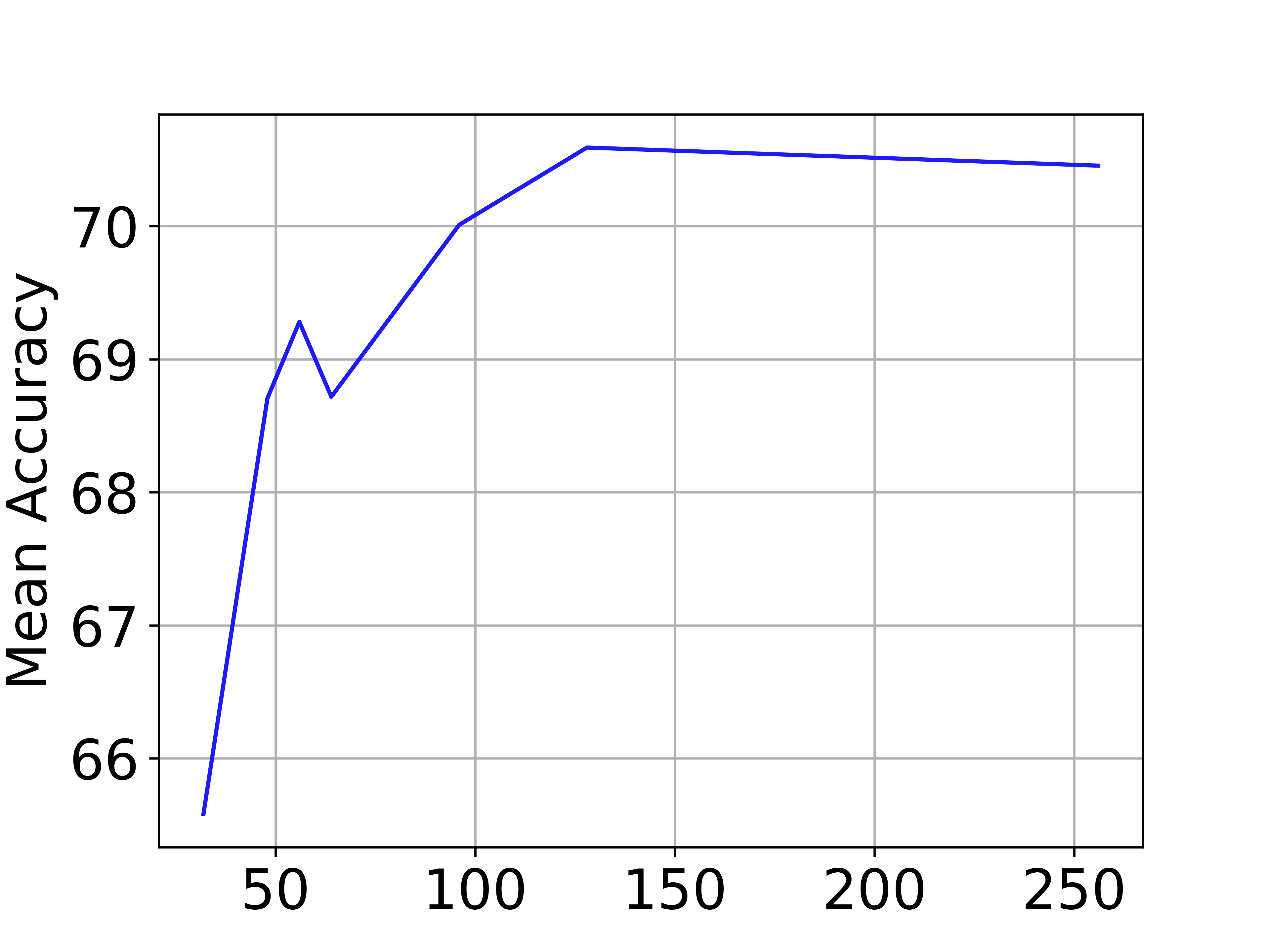}
    \includegraphics[width=0.49\linewidth]{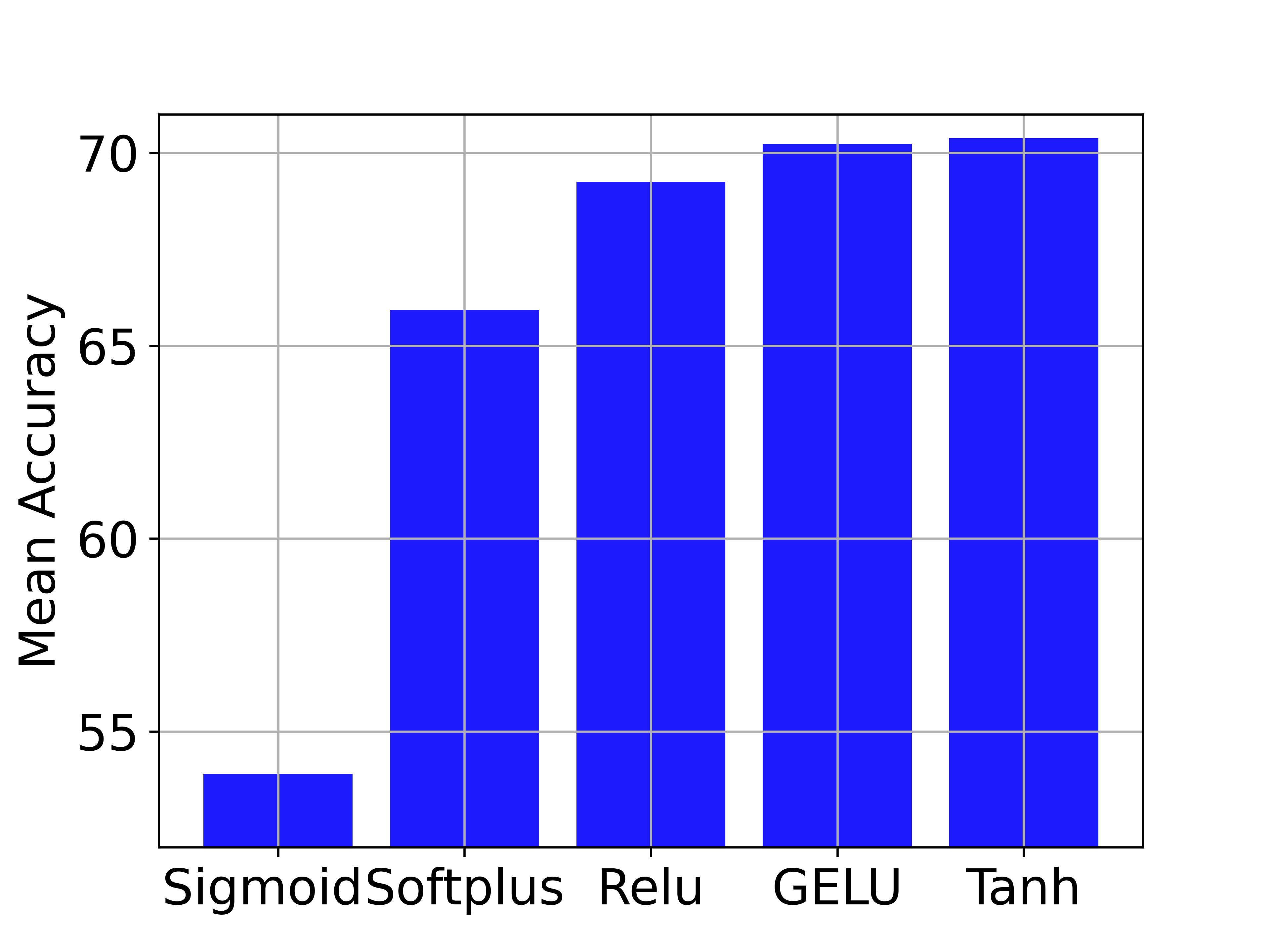}
        \caption{Evaluating the impact of hidden dimension size (left)  and the activation function (right) of the Knowledge modules on the validation set.}
    \label{fig:validation.hdim}
\end{figure}
We also experimented with MSE loss to train the modules, however, the performance was poorer and the combination of contrastive loss and MSE loss did not perform as well as classical contrastive loss. The Tanh activation performs the best among all compared activation functions perhaps due to bounded Lipschitz constant of 1.
Sigmoid is more contractive with  Lipschitz constant of $\frac{1}{4}$ which can help stability but contributes to vanishing gradients.
The grounding entity $E_g$ of \Cref{eq.eg} is 100\% accurately found by all LLM models.

\subsection{KML's logical operator performance}

We now extend KML to support logical operators such as $AND$, $OR$, and $NOT$ over knowledge graph relations.
Consider a binary relation $r_k(\cdot,\cdot)$ and two head entities $e_i$ and $e_j$.
Our goal is to infer the set of entities $\mathcal{E}_{ijk}$ that satisfy the logical conjunction
$r_k(e_i,\cdot) \land r_k(e_j,\cdot)$, i.e., entities that are simultaneously related to both $e_i$
and $e_j$ under relation $r_k$.

Formally, this set is defined as
\[
\mathcal{E}_{ijk}
=
\bigl\{
e \in \mathcal{E}
\;\big|\;
r_k(e_i, e) \;\land\; r_k(e_j, e)
\bigr\}.
\]

We compute entity-level scores and perform logical operations directly in the score space.
Let $\mathbf{E} \in \mathbb{R}^{|\mathcal{E}| \times d}$ denote the matrix of all entity embeddings,
and let $\mathbf{e}_i$ and $\mathbf{e}_j$ denote the output embeddings produced by the knowledge module
$\phi_{R_k}$ for entities $e_i$ and $e_j$, respectively.
Entity scores under relation $r_k$ are obtained via sigmoid activated similarities.
Next we implement logical operators as follows:

\noindent
\textbf{Logical AND.}
The score of entities satisfying the logical $AND$ is defined as
\begin{equation}
\text{score}_{AND}(\mathcal{E}_{ijk})
=
\sigma\!\left(\mathbf{E}\mathbf{e}_i^{\top}\right)
\;\circ\;
\sigma\!\left(\mathbf{E}\mathbf{e}_j^{\top}\right),
\end{equation}
where $\circ$ denotes element-wise multiplication and $\sigma(\cdot)$ is the sigmoid function.
This operation corresponds to a soft intersection, assigning high scores only to entities that
score highly for both relations.

\noindent
\textbf{Logical OR.}
The logical $OR$ operator is implemented as
\begin{equation}
\text{score}_{OR}(\mathcal{E}_{ijk})
=
\sigma\!\left(\mathbf{E}\mathbf{e}_i^{\top}\right)
+
\sigma\!\left(\mathbf{E}\mathbf{e}_j^{\top}\right),
\end{equation}
which corresponds to a soft union, favoring entities that satisfy at least one of the relations.

\noindent
\textbf{Logical NOT.}
Finally, the logical $NOT$ operator is implemented as
\begin{equation}
\text{score}_{NOT}\bigl(r_k(e_i,\cdot)\bigr)
=
1 -
\sigma\!\left(\mathbf{E}\mathbf{e}_i^{\top}\right),
\end{equation}
which assigns high scores to entities that are unlikely to be related to $e_i$ under relation $r_k$.

To validate this, we conducted an experiment on learned knowledge modules of each relation type.
We report precision at 10 performance for each module in~\Cref{tab:km_logical_ops}.
As can be seen from the results, the performance of NOT and AND operator for most of the relations is good while the performance of OR operator is not that significant.  The reason may be the logical OR operator will generate a large number of candidates for each relation and the simple scoring mechanism may not be that effective in this context. Perhaps, to implement logical $OR$ we may need to learn explicit vector function forms.
\begin{table}[t]
\centering
\begin{tabular}{lccc}
\toprule
\textbf{Knowledge Module} & \textbf{NOT} & \textbf{AND} & \textbf{OR} \\
\midrule
HAS\_GROUNDED\_TOOL      & 100.0 & 94.4 & 24.6 \\
HAS\_PURPOSE             & 100.0 & 91.0 & 27.6 \\
HAS\_NEXT\_STEP          & 100.0 & 60.0 & 82.5 \\
HAS\_TASK                & 100.0 & --   & 70.9 \\
HAS\_STEP                & 100.0 & 97.4 & 37.9 \\
HAS\_ACTION              & 100.0 & 100.0 & 45.1 \\
HAS\_OBJECT              & 100.0& 99.8 & 22.3 \\
HAS\_SIMILAR\_PURPOSE    & 100.0 & 65.9 & 13.9 \\
\bottomrule
\end{tabular}
\caption{Precision at 10 accuracy (\%) of individual Knowledge Modules under different logical operators.}
\label{tab:km_logical_ops}
\end{table}

\subsection{KML's Interpretability.} 
One key advantage of KML is its interpretability, as illustrated in the qualitative examples in~\Cref{fig:q1} and~\Cref{fig:q2}. We observe step-by-step reasoning and intermediate interpretations from the learned embeddings, offering insight into the model’s decision process. 
The output entities represented by the output vectors of each KM seems reasonable and accurate for the given task.

\begin{table}[t]
\centering
\resizebox{.9\linewidth}{!}{
\scriptsize
\rowcolors{2}{gray!10}{white} % Optional: alternating row colors
\begin{tabular}{|>{\centering\arraybackslash}p{3cm}|
                >{\centering\arraybackslash}p{3cm}|
                >{\centering\arraybackslash}p{3cm}|}

\hline
%\multicolumn{2}{|c|}{
\begin{minipage}{3cm}
\includegraphics[width=3cm, trim=10 10 10 10, clip]{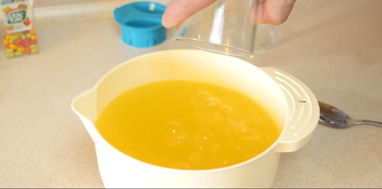}
\end{minipage}
%} 
&
Q. What is the other task that use the tool in this video for the same purpose?
& 
Task: Make Orange Juice
Step: pour the orange juice into the cup  
\\
\hline
\rowcolor{blue!25}
\textbf{HAS\_TOOL} & \textbf{TOOL\_TO\_STEP} & \textbf{STEP\_TO\_TASK} \\
\hline
\rowcolor{green!25}
Out=\textbf{Tool} & Out=\textbf{Step} & Out=\textbf{Task} \\
\hline
cup (0.361) & pour into the ingredients (0.337)
 & MakeCookie (0.221) \\
mug (0.273)
  & pour in after mix it (0.314)
 & MakeCocktail (0.208)
 \\
measuring cup (0.253)
& add some ingredients to the tea (0.308)
& MakeHomemadeIceCream (0.191) \\
yogurt (0.249)
&add some ingredients in the coffee (0.307)
& MakeChocolate (0.188) \\
bottle (0.223)
&pour the ingredients into the bowl (0.293)
& MakeCoffee (0.185) \\
\hline
\end{tabular}
}
\caption{Three-hop Reasoning using \kmlfclip: The step-by-step reasoning outputs of KML with estimated probability value over the domain of relation using embeddings.}
\label{fig:q1}
\end{table}

\begin{table}[t]
\resizebox{\linewidth}{!}{
\centering
\scriptsize
\rowcolors{2}{gray!10}{white} % Optional: alternating row colors
\begin{tabular}{|>{\centering\arraybackslash}p{3cm}|
                >{\centering\arraybackslash}p{3cm}|
                >{\centering\arraybackslash}p{3cm}|
                >{\centering\arraybackslash}p{3cm}|}

\hline
\multicolumn{2}{|c|}{
\begin{minipage}{3cm}
\includegraphics[width=3cm, trim=10 10 10 10, clip]{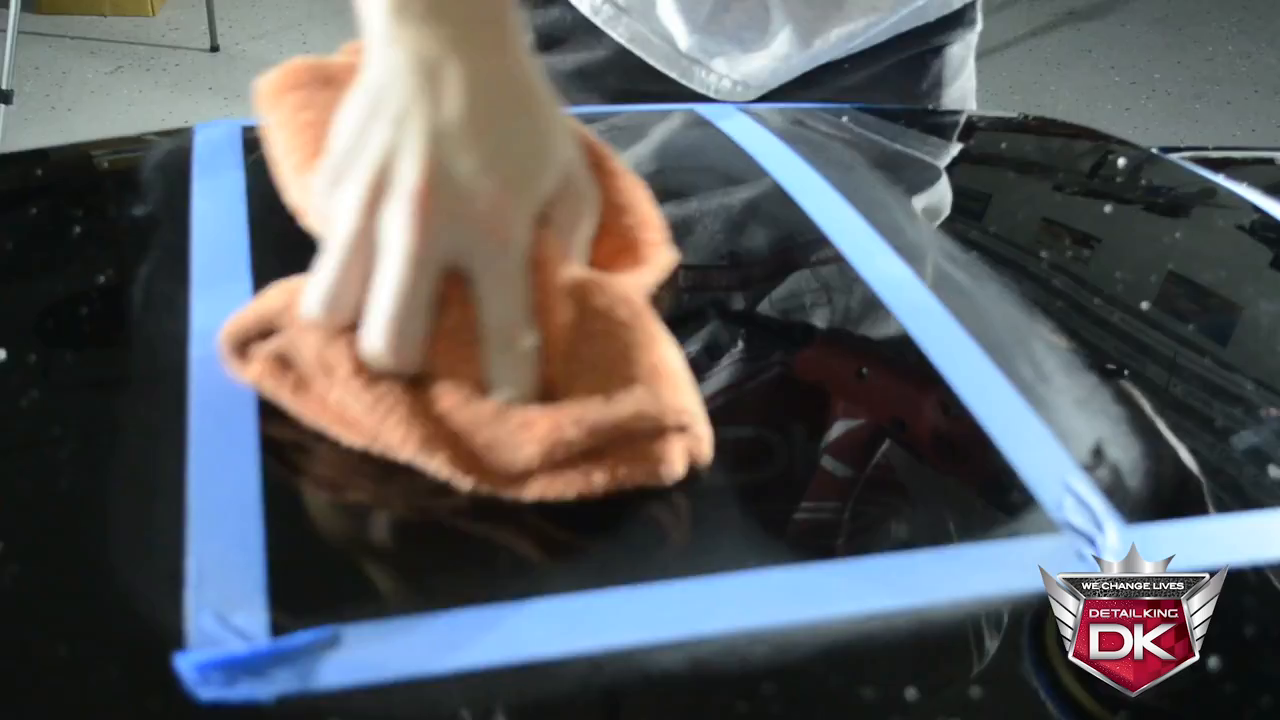}
\end{minipage}
} 
&
Q. What is an alternative tool can be used for this step?
& 
Task: Polish Car
Step: clean the scratch
\\
\hline
\rowcolor{blue!25}
\textbf{HAS\_TOOL} & \textbf{HAS\_PURPOSE} & \textbf{SIMILAR\_PURPOSE} &  \textbf{PURPOSE\_TO\_TOOL} \\
\hline
\rowcolor{green!25}
Out=\textbf{Tool} & Out=\textbf{Purpose} & Out=\textbf{Purpose} & Out=\textbf{Tool} \\
\hline
microfiber towel (0.237)  &  wiping up dust (0.249)  &  cleaning things (0.233)  &  cloth (0.263)  \\
microfiber cloth (0.229)  &  dusting (0.241)  &  cleaning (0.219)  &  towel (0.253)  \\
polishing pad (0.217)  &  cleaning (0.238)  &  wiping up wet spill (0.205)  &  soap (0.219)  \\
cloth (0.168)  &  wiping up wet spill (0.228)  &  clean dirty things (0.204)  &  curtains (0.178)  \\
towel (0.125)  &  wiping (0.215)  &  cleaning up (0.200)  &  cushion (0.178) \\ \hline
\end{tabular}
}
\caption{Four-hop Reasoning using \kmlfclip: The step-by-step reasoning outputs of KML with estimated probability value over the domain of relation using embeddings.}
\label{fig:q2}
\end{table}

\begin{table}[t!]
%\resizebox{\linewidth}{!}{
\scriptsize
\centering
\rowcolors{2}{gray!10}{white}
\begin{tabular}{|>{\centering\arraybackslash}m{5cm}|             
>{\centering\arraybackslash}m{3cm}|}
\hline
\raisebox{-.6\height}{\includegraphics[height=2.5cm]{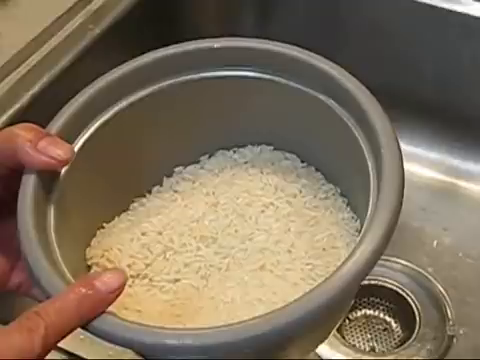}}  
&
\parbox[t]{\linewidth}{\raggedright \textbf{Question:} What tool could be used in the next step? \newline 
\textbf{Task:} Use Rice-Cooker To Cook Rice\newline 
\textbf{Step:} Take out some rice.}
\\ \hline
\rowcolor{blue!25}
\textbf{NEXT\_STEP} & \textbf{HAS\_TOOL} \\
\hline
\rowcolor{green!25}
Output=\textbf{Next Step} & Output=\textbf{Tool} \\
\hline
cook the rice by rice cooker (0.331)  &  rice cooker (0.233) \\
  
put the washed rice into the rice cooker (0.311)  &  water (0.201) \\
  
soak and wash the rice (0.271)  &  bowl (0.197) \\
  
take out some rice (0.154)  &  water container (0.174) \\
  
pour the noodles into the water and stir (0.138)  &  water basin (0.167) \\
\hline
\end{tabular}
%}
\caption{
Step-by-step reasoning of our KML method for a given question. The program \texttt{[NEXT\_STEP, HAS\_TOOL]} is executed by corresponding Knowledge Modules, producing intermediate embeddings with interpretable semantic meaning. This enables explanation of the final output. Experts can modify the program to explore alternative reasoning paths. The program is generated by an LLM based on the question and KG schema.
}
\label{fig:illustrate1}
\end{table}

\section{Conclusion}
We presented an extended formulation of Knowledge Module Learning (KML) as a principled neuro-symbolic framework for procedural knowledge reasoning. KML decomposes reasoning into relation-specific neural modules and composes them through explicit programmatic traversals, enabling interpretable and modular reasoning over structured procedural knowledge graphs. In contrast to black-box vision–language models, KML exposes intermediate reasoning states while retaining the generalization capability of neural representations.
This journal extension introduced a formal treatment of logical operators within KML. We showed how logical reasoning over relations can be implemented as differentiable operations in score space, providing a soft relaxation of Boolean set operations compatible with gradient-based learning. We further provided a deterministic error bound for multi-hop traversal execution, offering theoretical insight into error propagation and stability under composition of learned knowledge modules.
Experimental results demonstrate that KML consistently outperforms strong baselines on procedural reasoning tasks while producing interpretable execution traces aligned with relational semantics. These results, together with the theoretical analysis, establish KML as a robust foundation for structured and reliable procedural reasoning.

KML naturally supports extensions to richer logical forms, such as nested expressions, quantification, and temporal reasoning. An important future direction is to integrate KML with embodied reasoning, where procedural knowledge must be grounded in perception and action. Extending KML to embodied agents would enable reasoning over affordances, state transitions, and action outcomes, bridging symbolic procedural knowledge and real-world interaction.

\noindent
\textbf{Acknowledgments}. This research/project is supported by the National Research Foundation, Singapore, under its NRF Fellowship (Award\# NRF-NRFF14-2022-0001). 

{\small
\bibliographystyle{IEEEtran}
\bibliography{main}
}

\newpage 
\section{Appendix}
\subsection{LLM Prompts}

\begin{itemize}
    \item \Cref{fig:extractingprompt}: Prompt for extracting action and object from step description.
    \item \Cref{fig:purpose_grounding}: Prompt for extracting purposes of tools.
    \item \Cref{fig:programprompt}: Prompt for generating logical programs. 
    \item  \Cref{fig:prompt.rephrasing}: Prompt for rephrasing question.
\end{itemize}

\subsection{Tables and Figures}
\begin{itemize}
    \item \Cref{tab:dataset}: Seventeen traversal templates used to construct the PKR-QA dataset. 
    \item \Cref{fig:illustrate3}: Additional example of Step-by-step reasoning of our KML. 
\end{itemize}

\begin{figure}[h!]
    \centering
\noindent\fbox{%
    \parbox{0.45\textwidth}{%
    You are given a list of steps (each line is a step). For each step, extract the following information if specified: \\ 
1. Action: The verb or action being performed.\\ 
2. Object: The item or thing on which the action is being performed.\\ 
Output as a list of dictionary where each item in the list is the extraction for a step, and each dictionary contains the information extracted for the step. Using this format: [\{``Step": ``the input step", ``action": ``'', ``object": ``''\}].\\ 
Below are example: \\ 
Input steps: \\ 
clean inner wall of container\\ 
place a piece of paper on weighing pan\\ 
place weighing sample on weighing pan and read\\ 
place the board on each side\\ 
Output: [\{``Step": ``clean inner wall of container", ``action": ``clean", ``object": ``inner wall of container"
\}, \\
\{ ``Step": ``place a piece of paper on weighing pan", 
    ``action": ``place",
    ``object": ``piece of paper"\}, \\
\{ ``Step": ``place weighing sample on weighing pan and read",
    ``action": ``place", 
    ``object": ``weighing sample"\}, \\
\{ ``Step": ``place the board on each side",
    ``action": ``place",
    ``object": ``board"\}]\\ 
Input steps:  [\texttt{Steps}] \\ 
Output:
    }%
}
    
    \caption{LLM prompt for extracting the action and object based on the text annotation}
    \label{fig:extractingprompt}
\end{figure}

\begin{figure}[h!]
\centering
\begin{tcolorbox}[colback=gray!5, colframe=gray!80, width=\columnwidth, title=Prompt for \textit{Tool}, fonttitle=\bfseries]
\small
Given a task, step, tool, and a set of options related to the tool's capabilities, your job is to identify and output the relevant options that describe the purpose of using the specified tool in the given step of the task.

Only choose the options that are really relevant. Can output empty list if none relevant options are found. Provide only the relevant options as the output in a list format and rank them from the most relevant to least.\\

Example: \\
Task: ``CookOmelet"\\
Step: ``pour the egg into the bowl"\\
Tool: ``egg"\\
Options: [``make breakfast", ``hiding at easter", ``nurturing", ``make omelet", ``breed", ``cooking", ``rolling on white house lawn", ``food", ``stick to skillet"]\\
Output: [``make breakfast", ``make omelet", ``cooking", ``food"]\\

ACTUAL OUTPUT  \\
Task: \{task\}\\
Step: \{step\}\\
Tool: \{term\}\\
Options: \{options\}\\
Output:     
\end{tcolorbox}
\caption{Prompt template for grounding the \textit{Tool}’s \textit{Purpose}. Similar templates are used for \textit{Object} and \textit{Action}.}
\label{fig:purpose_grounding}
\end{figure}

\begin{figure}[t]
\centering        
\noindent\fbox{%
    \parbox{0.45\textwidth}{
\scriptsize
I have a knowledge graph in the following format.\\
KG Entities: Step, Action, Object, GroundedTool, Purpose, Domain, Task
KG Relations\\
------------\\
$ Step \rightarrow \text{HAS\_TOOL} \rightarrow Tool$ \\
$ Tool \rightarrow HAS\_PURPOSE \rightarrow Purpose $ \\
$ Step \rightarrow HAS\_NEXT\_STEP \rightarrow Step$ \\
$ Domain \rightarrow HAS\_TASK \rightarrow Task$ \\
$ Task \rightarrow HAS\_STEP \rightarrow Step$ \\
$ Step \rightarrow HAS\_ACTION \rightarrow Action$ \\
$ Step \rightarrow HAS\_OBJECT \rightarrow Object$ \\
$ Purpose \rightarrow HAS\_SIMILAR\_PURPOSE \rightarrow Purpose $ \\
$ Tool \rightarrow TOOL\_TO\_STEP \rightarrow Step $ \\
$ Purpose \rightarrow PURPOSE\_TO\_TOOL  \rightarrow Tool $ \\
$ Step \rightarrow HAS\_PREVIOUS\_STEP \rightarrow Step $ \\
$ Task \rightarrow IN\_DOMAIN \rightarrow Domain$ \\ 
$ Step \rightarrow IN\_TASK \rightarrow Task$ \\
$ Action \rightarrow ACTION\_IN\_STEP \rightarrow Step $ \\
$ Object \rightarrow OBJECT\_IN\_STEP \rightarrow Step $ \\
\\
I can recognize the step and the task of the video. From knowing the step and the task, I need to answer the question based on the KG relations. 
I have to traverse from one entity in the KG to the next.
The QUESTION is ``QUESTION".\\

You can answer this step by step. First answer the following question.
What should be the first entity I should recognize in the video? Options are only [Step] or [Task].\\
Then generate the answer, “what is the answer entity for the QUESTION?”\\
Generate the list of relations that allows you to traverse from the [start entity] to the [answer entity].\\
Generate a consistent list of relations. Consistent means if Entity\_A $\rightarrow$ KG Relation\_X $\rightarrow$ Entity\_B is the k-th relation then the k+1-th relation should start from Entity\_B.
At every answer check if you can traverse to the current Entity from the start Entity.
You should be able to traverse the knowledge graph from start entity to answer entity.\\
Now generate the answer in Python dictionary where the first key is the GROUNDED\_ENTITY and generate the correct value.
Then for the second key generate the programs as a python list of traversal nodes. The key name for this is the PROGRAMS. 
The value of PROGRAMS are the list of lists where each list is a valid traversal from the GROUNDED\_ENTITY to the target entity to answer the question. 
You may generate alternative path that allows you to traverse from the start entity to the answer entity in the shortest possible ways also.
Note that each program should contain only the relation name.
Generate Python list of relations for each alternative paths.
Now generate.
    }
}
\newline 
\caption{The program synthesis using LLMs. The used prompt to generate the logical programs of traversal is shown here.}
    \label{fig:programprompt}
\end{figure}

\begin{figure}[th!]
    \centering
    \noindent\fbox{%
    \parbox{0.49\textwidth}{%
    Rephrase the following question: ``[\texttt{original question sentence}]'' in 30 different ways while maintaining its semantic meaning and terms such as tool, step, task, domain.\\ 
    Ensure the rephrasing include a mix of conventional and unconventional sentence structures, such as changing the order of words.\\ 
    List each variation using * as bullet points.
    }%
}
    \caption{The prompt used for rephrasing the questions.}
    \label{fig:prompt.rephrasing}
\end{figure}

\begin{table}[t!]
%\resizebox{\linewidth}{!}{
\scriptsize
\centering
\rowcolors{2}{gray!10}{white}
\begin{tabular}{
|>{\centering\arraybackslash}m{5cm}|             
}
\hline
\raisebox{-.6\height}{\includegraphics[width=5cm]{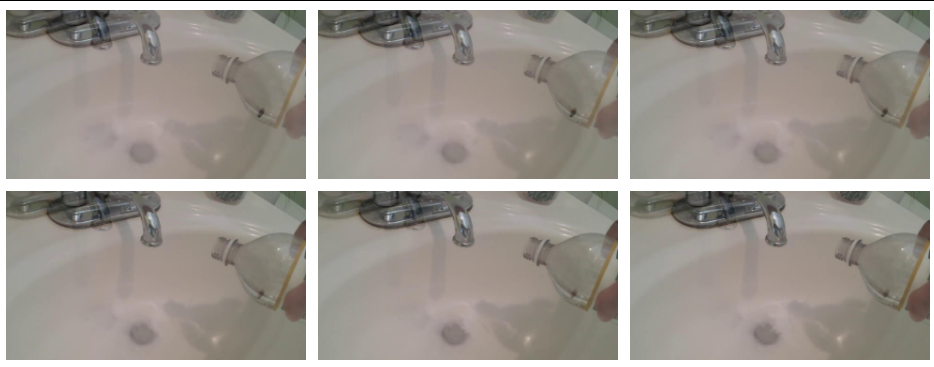}}  
\\
\parbox[t]{\linewidth}{\raggedright \textbf{Question:} What could potentially be the next step? \newline 
\textbf{Task:} Unclog Sink With Baking Soda\newline 
\textbf{Step:} Add hot water to the sink hole.}
\\ \hline
\rowcolor{blue!25}
\textbf{NEXT\_STEP}  \\
\hline
\rowcolor{green!25}
Output=\textbf{Next Step} \\
\hline
add baking soda to the sink hole (0.294) \\

add hot water to the sink hole (0.265) \\

add vinegar to the sink hole (0.255) \\

turn on the water tap to wash (0.246) \\

remove the old faucet (0.169) \\
\hline
\end{tabular}
%}
\caption{
This question demands reasoning over the next step similar to action anticipation.
As can be seen the KML method seem to handle the future uncertainties well and list down the plausible set of next steps.
}
\label{fig:illustrate3}
\end{table}

\begin{table*}[t]
\centering
%\resizebox{.95\textwidth}{!}{
\begin{tabular}{c|p{7.5cm}|p{5cm}|r}
\toprule
\textbf{No.} & \textbf{Traversal Template} & Sample Question & Test Size \\ \midrule
$\boldsymbol{Q}_{1}$& $\texttt{\textcolor{blue}{Task}} \xrightarrow{\texttt{HAS\_STEP}}\texttt{\textcolor{blue}{Step}} \xrightarrow{\texttt{HAS\_TOOL}} \texttt{\textcolor{red}{Tool}}$                                                                                 &  What \textbf{tool} can be used in this step?             &  5,024    \\\midrule
$\boldsymbol{Q}_{2}$     &  $\texttt{\textcolor{blue}{Task}} \xrightarrow{\texttt{HAS\_STEP}} \texttt{\textcolor{blue}{Step}}  \xrightarrow{\texttt{HAS\_NEXT\_STEP} (\text{freq.})} \texttt{\textcolor{red}{Step}}$                 & What is the \textbf{most} probable \textbf{next} step?                                                          & 5,412  \\\midrule
$\boldsymbol{Q}_{3}$       &   $\texttt{\textcolor{blue}{Task}} \xrightarrow{\texttt{HAS\_STEP}}\texttt{\textcolor{blue}{Step}} \xrightarrow{\texttt{HAS\_NEXT\_STEP}} \texttt{\textcolor{red}{Step}}$               & What could potentially be the \textbf{next} step?                                                         & 5,412  \\\midrule
$\boldsymbol{Q}_{4}$     &     $\texttt{\textcolor{red}{Step}} \xrightarrow{\texttt{HAS\_NEXT\_STEP}} \texttt{\textcolor{blue}{Step}}\xleftarrow{\texttt{HAS\_STEP}}\texttt{\textcolor{blue}{Task}}  $               & What could potentially be the \textbf{preceding} step?                                                   & 5,408   \\\midrule
$\boldsymbol{Q}_{5}$    &     $\texttt{\textcolor{red}{Step}} \xrightarrow{\texttt{HAS\_NEXT\_STEP} (\text{freq.})} \texttt{\textcolor{blue}{Step}} \xleftarrow{\texttt{HAS\_STEP}}\texttt{\textcolor{blue}{Task}} $               & What is the \textbf{most} probable \textbf{preceding} step?           & 5,408  \\\midrule
$\boldsymbol{Q}_{6}$   &   $\texttt{\textcolor{blue}{Task}} \xrightarrow{\texttt{HAS\_STEP}}\texttt{\textcolor{blue}{Step}} \xrightarrow{\texttt{HAS\_NEXT\_STEP}} \texttt{Step}\xrightarrow{\texttt{HAS\_TOOL}} \texttt{\textcolor{red}{Tool}} $                  & What \textbf{tool} could be used in the \textbf{next} step? & 5,236  \\\midrule
$\boldsymbol{Q}_{7}$  &    $\texttt{START} \xrightarrow{\texttt{HAS\_NEXT\_STEP} (\text{freq.})} \texttt{\textcolor{red}{Step}}\xleftarrow{\texttt{HAS\_STEP}}\texttt{\textcolor{blue}{Task}}$              & What is the most probable \textbf{initial step} of this task? & 1,439   \\\midrule
$\boldsymbol{Q}_{8}$  &    $\texttt{\textcolor{blue}{Task}} \xrightarrow{\texttt{HAS\_STEP}} \texttt{\textcolor{red}{Step}}   \xrightarrow{\texttt{HAS\_NEXT\_STEP}\text{(freq.)}} \texttt{END}$             & What is the most probable \textbf{final step} of this task?  & 1,439   \\\midrule
$\boldsymbol{Q}_{9}$  &   $\texttt{\textcolor{blue}{Task}} \xleftarrow{\texttt{HAS\_TASK}} \texttt{\textcolor{red}{Domain}}$              & Which \textbf{domain} does this task belong to? & 1,439 \\\midrule
$\boldsymbol{Q}_{10}$   &   $\texttt{\textcolor{blue}{Task}} \xrightarrow{\texttt{HAS\_STEP}}\texttt{\textcolor{blue}{Step}} \xrightarrow{\texttt{HAS\_TOOL}} \texttt{Tool}\xrightarrow{\texttt{HAS\_PURPOSE}} \texttt{\textcolor{red}{Purpose}}$             & What is the \textbf{purpose} of the \textbf{tool} used in this step?  &2,822   \\\midrule
$\boldsymbol{Q}_{11}$    &   $\texttt{\textcolor{blue}{Task}} \xrightarrow{\texttt{HAS\_STEP}}\texttt{\textcolor{blue}{Step}} \xrightarrow{\texttt{HAS\_ACTION}} \texttt{Action}\xrightarrow{\texttt{HAS\_PURPOSE}} \texttt{\textcolor{red}{Purpose}}$             & What is the \textbf{purpose} of the \textbf{action} in this step?    &258     \\\midrule
$\boldsymbol{Q}_{12}$  &    $\texttt{\textcolor{blue}{Task}} \xrightarrow{\texttt{HAS\_STEP}}\texttt{\textcolor{blue}{Step}} \xrightarrow{\texttt{HAS\_OBJECT}} \texttt{Object}\xrightarrow{\texttt{HAS\_PURPOSE}} \texttt{\textcolor{red}{Purpose}}$             & What is the \textbf{purpose} of the \textbf{object} in this step?     &738     \\\midrule
$\boldsymbol{Q}_{13}$   &    $\texttt{Tool}:T_1 \xrightarrow{\texttt{HAS\_PURPOSE}} \texttt{\textcolor{red}{Purpose}} \textcolor{red}{:P_1} \quad \textbf{where} \quad \texttt{\textcolor{blue}{Task}}\xrightarrow{\texttt{HAS\_STEP}}  \texttt{\textcolor{blue}{Step}} \xrightarrow{\texttt{HAS\_TOOL}} \texttt{Tool}:T1 \xnorightarrow{\texttt{HAS\_PURPOSE}} \texttt{Purpose}:P_1$            & What are the \textbf{additional purposes} for which the tool shown in this video can be used, aside from its intended use in the step demonstrated? & 2,611    \\\midrule
$\boldsymbol{Q}_{14}$  &     $\texttt{Object}:O_1 \xrightarrow{\texttt{HAS\_PURPOSE}} \texttt{\textcolor{red}{Purpose}}\textcolor{red}{:P_1} \quad \textbf{where} \quad \texttt{\textcolor{blue}{Task}}\xrightarrow{\texttt{HAS\_STEP}}  \texttt{\textcolor{blue}{Step}} \xrightarrow{\texttt{HAS\_OBJECT}} \texttt{Object}:O1 \xnorightarrow{\texttt{HAS\_PURPOSE}} \texttt{Purpose}:P_1$             & What are the \textbf{additional purposes} for which the \textbf{object} shown in this video can be used, if I do not want to use it for intended use in the step demonstrated? &674     \\\midrule
$\boldsymbol{Q}_{15}$  &  $\textcolor{red}{\texttt{Tool}:T_1} \xrightarrow{\texttt{HAS\_PURPOSE}} \texttt{Purpose}:P_1 \quad \textbf{where} \quad \texttt{\textcolor{blue}{Task}} \xrightarrow{\texttt{HAS\_STEP}}\texttt{\textcolor{blue}{Step}} \xrightarrow{\texttt{HAS\_TOOL}} \texttt{Tool}:T_2\xrightarrow{\texttt{HAS\_PURPOSE}} \texttt{Purpose}:P_2\xrightarrow{\texttt{HAS\_SIMILAR\_PURPOSE}} \texttt{Purpose}:P_3 \quad \textbf{and} \quad T_1 \neq T_2 \quad \textbf{and} \quad (P_1 = P_2 \quad \textbf{or} \quad P_1 = P_3)$                 & what is an \textbf{alternative tool} that can be used for this step if I don't have the current tool?  &875   \\\midrule
$\boldsymbol{Q}_{16}$  &   $\textcolor{blue}{\texttt{Task:}A_1} \xrightarrow{\texttt{HAS\_STEP}}\texttt{\textcolor{blue}{Step}} \xrightarrow{\texttt{HAS\_TOOL}} \texttt{Tool}:T_1\xrightarrow{\texttt{HAS\_PURPOSE}} \texttt{Purpose} \xleftarrow{\texttt{HAS\_PURPOSE}}\texttt{Tool}:T_1\xleftarrow{\texttt{HAS\_TOOL}}\texttt{Step}\xleftarrow{\texttt{HAS\_STEP}}\textcolor{red}{\texttt{Task}:A_2} \quad \textbf{where} \quad A_1 \neq A_2$              & What can be \textbf{other task} that can use the \textbf{tool} shown in this video for the same purpose?    & 2,513   \\\midrule
$\boldsymbol{Q}_{17}$ &      $\textcolor{blue}{\texttt{Task:}A_1} \xrightarrow{\texttt{HAS\_STEP}}\texttt{\textcolor{blue}{Step}} \xrightarrow{\texttt{HAS\_OBJECT}} \texttt{Object}:O_1\xrightarrow{\texttt{HAS\_PURPOSE}} \texttt{Purpose} \xleftarrow{\texttt{HAS\_PURPOSE}}\texttt{Object}:O_1\xleftarrow{\texttt{HAS\_OBJECT}}\texttt{Step}\xleftarrow{\texttt{HAS\_STEP}}\textcolor{red}{\texttt{Task}:A_2} \quad \textbf{where} \quad A_1 \neq A_2$            & What can be the \textbf{other task} that use the \textbf{object} in this video for the same purpose?     & 213    \\ \midrule
\multicolumn{3}{r|}{\textbf{Total}}                                                                                                                                                                                                                  & 46,921  \\ \bottomrule
\end{tabular}

%}
\caption{Seventeen traversal templates used to construct question-answer instances in the \dataset dataset. For each template, we provide a sample question and the number of test set instances generated. In the traversal templates, \textcolor{blue}{blue text} highlights information grounded in the input video, while \textcolor{red}{red text} marks the target answer node.}
\label{tab:dataset}
\end{table*}

\begin{figure*}[th]
    \centering
    \includegraphics[width=\textwidth]{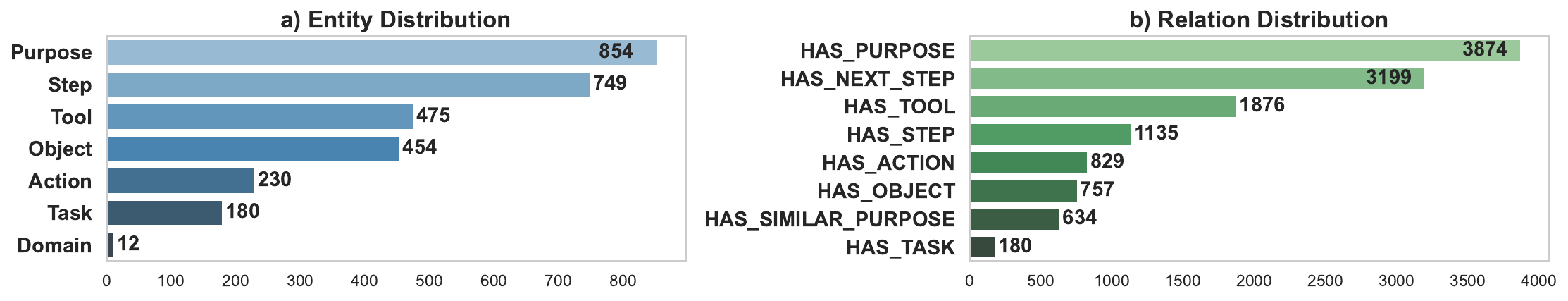}
    \caption{Entity and relation distributions in \pkg. \pkg contains a total of 2,954 unique entities and 12,484 relations.}
    \label{fig:kg_statistic}
\end{figure*}\textbf{}

\end{document}